\documentclass[10pt]{article}

\usepackage[utf8]{inputenc}
\usepackage[preprint]{tmlr} 

\usepackage{amsmath, amssymb, amsthm}
\usepackage{mathtools}
\usepackage{graphicx}
\usepackage{hyperref}
\usepackage{url}
\usepackage{booktabs}
\usepackage{algorithm}
\usepackage{algpseudocode}
\usepackage{array}
\usepackage[table]{xcolor}
\usepackage{pifont}
\usepackage{multirow}
\usepackage{enumitem}
\usepackage{subcaption}
\usepackage[disable]{todonotes}

\newtheorem{theorem}{Theorem}
\newtheorem{lemma}{Lemma}
\newtheorem{corollary}{Corollary}

\title{Estimating Rare Events in Language Models \\ with Proper Evaluation}

\author{\name Nikita Y. Parulekar \email nparule2@jhu.edu \\
      \addr Johns Hopkins University
      \AND
      \name Anqi Liu \email aliu.cs@jhu.edu \\
      \addr Johns Hopkins University
}

\def\month{}
\def\year{}
\def\openreview{}

\begin{document}

\maketitle
\raggedbottom

\begin{abstract}

Quantifying the risk of rare failures in language models, such as those triggered by adversarial distribution shifts or very large-scale deployments, requires estimating probabilities far too small for random sampling. While recent work has formalized Low Probability Estimation, existing pipelines remain fragile in the rarest regimes: estimators can suffer zero-estimate collapse or systematic bias, and standard evaluation losses can become unstable or poorly matched to asymmetric safety costs. In this work, we introduce Gradient Activation Adaptive Multi-Level Splitting (GA-AMLS), which adapts rare-event Monte Carlo methods to the continuous activation space of language models. Specifically, GA-AMLS uses a gradient-based MCMC kernel to navigate activation space, eliminating the zero-estimate collapse of input-space search and replacing the independence assumptions of prior activation-space estimators with conditional sampling under an explicit, heavier-tailed activation prior. We also propose the Shifted-Power Bregman (SPB) Loss, a proper scoring rule that remains finite for zero-estimates and offers tunable asymmetry between underestimation and overestimation penalties. Experiments on small transformer models reveal a bias-variance tradeoff: GA-AMLS achieves the lowest loss under symmetric evaluation, reducing average log-space squared error relative to the strongest baseline across model sizes, while methods with overestimation bias prevail under asymmetric penalties. Our findings highlight that estimator choice should be matched to deployment context. More broadly, our work establishes activation space as a tractable domain for rare-event estimation in language models, circumventing the brittleness of discrete input-space search.

\end{abstract}

\section{Introduction}

Modern large language models (LLMs) are trained to generalize to new inputs by minimizing expected loss during training. However, this averaging objective leaves them susceptible to producing highly undesirable outputs on a small subset of rare inputs. While these events may appear statistically negligible during training, the rarity of a failure is often relative. In deployment, distribution shifts, whether manufactured through adversarial jailbreaks or incidental due to goal misgeneralization \citep{shah2022goalmisgeneralizationcorrectspecifications}, can make these catastrophic failures significantly more common. Furthermore, in safety-critical applications such as autonomous vehicles or medical infrastructure, even a single failure can be unacceptable. Additionally, the large-scale use of language models can make low-probability events eventual certainties. 
Therefore, there is a growing need for accurate low probability estimation (LPE) methods: the ability to quantify the likelihood of a model producing a specific harmful output, even when that probability is extremely small, such as between $10^{-5}$ and $10^{-9}$ , ranges where naive sampling is computationally infeasible. Following the problem setting introduced by \cite{wu2025estimatingprobabilitiesrareoutputs}, we study LPE in the context of argmax sampling, where our goal is to estimate the probability that a specific target token will have the largest output logit within a computationally constrained budget as shown in Figure \ref{fig:lpe-overview}. 

The status quo established by \citet{wu2025estimatingprobabilitiesrareoutputs} comprises two classes of methods. Input-space Importance Sampling searches the discrete token space; for the rarest targets it often fails to find any triggering input, collapsing estimates to zero, and it additionally requires expensive full backward passes and knowledge of exact input-sequence probabilities. The activation-space alternative, Quadratic Logit Decomposition (QLD), avoids zero estimates by recombining whitened activation components as if they were independent, producing \(n^2\) synthetic candidates from \(n\) empirical activations. But whitening guarantees only uncorrelatedness, not independence, and language-model activations carry heavy-tailed, outlier-dominated structure \citep{sun2024massive,dettmers2022llmint88bitmatrixmultiplication} that can survive whitening; empirically, the resulting distortion surfaces mainly as overestimation on rare events. Finally, the metrics used to compare these methods, Itakura-Saito (IS) loss and squared error loss in log space, are undefined at zero estimates, which \citet{wu2025estimatingprobabilitiesrareoutputs} patch with an affine calibration that itself requires known ground-truth probabilities.

\begin{figure}[t]
    \centering
    \captionsetup[subfigure]{font=small,skip=2pt}

    \begin{subfigure}[!htbp]{0.49\textwidth}
        \centering
        \includegraphics[
            width=\linewidth,
            trim={10 10 10 10},
            clip
        ]{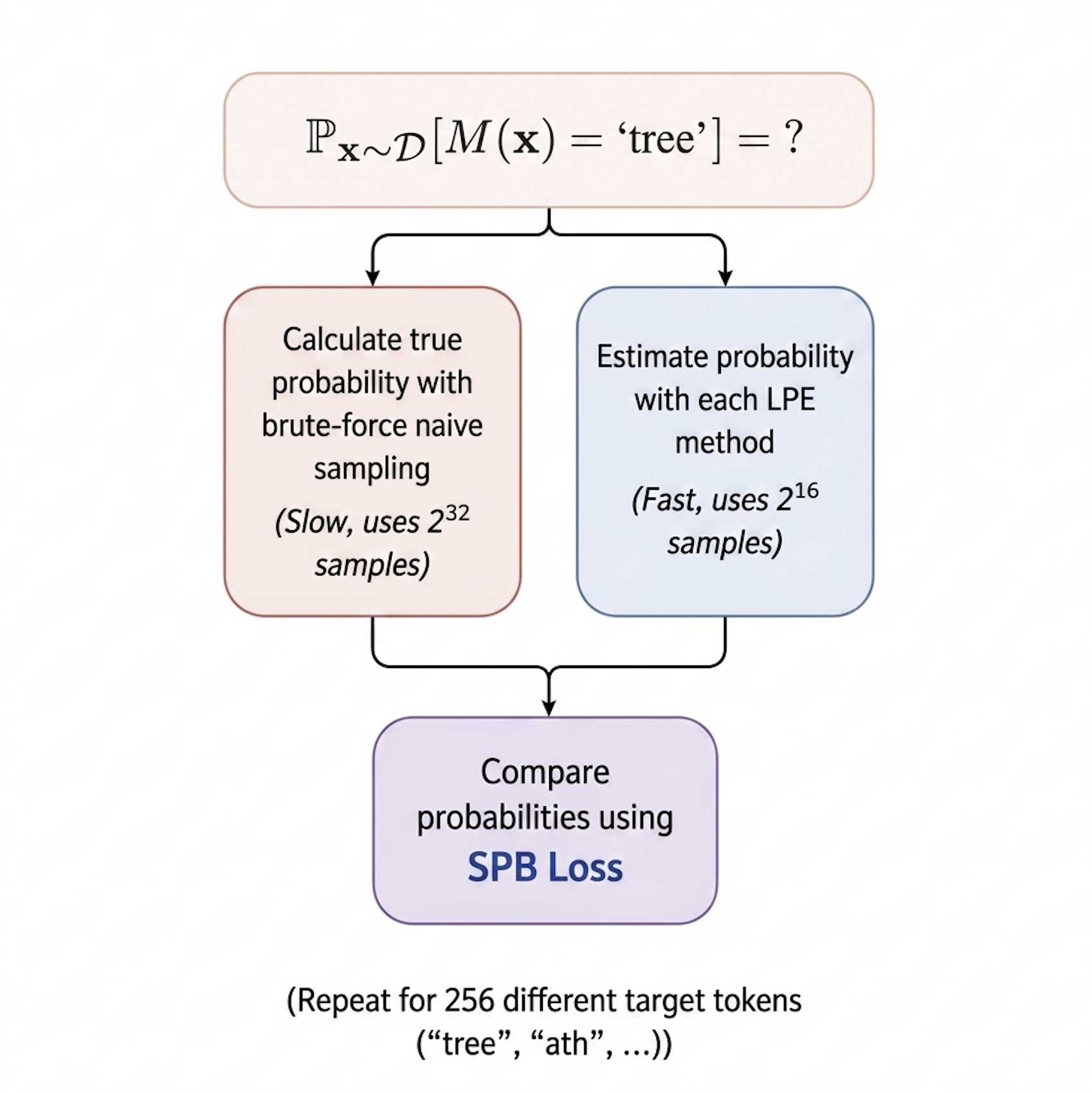}
        \caption{Low-probability estimation setup.}
        \label{fig:lpe-overview}
    \end{subfigure}
    \hfill
    \begin{subfigure}[!htbp]{0.45\textwidth}
        \centering
        \includegraphics[
            width=\linewidth,
        ]{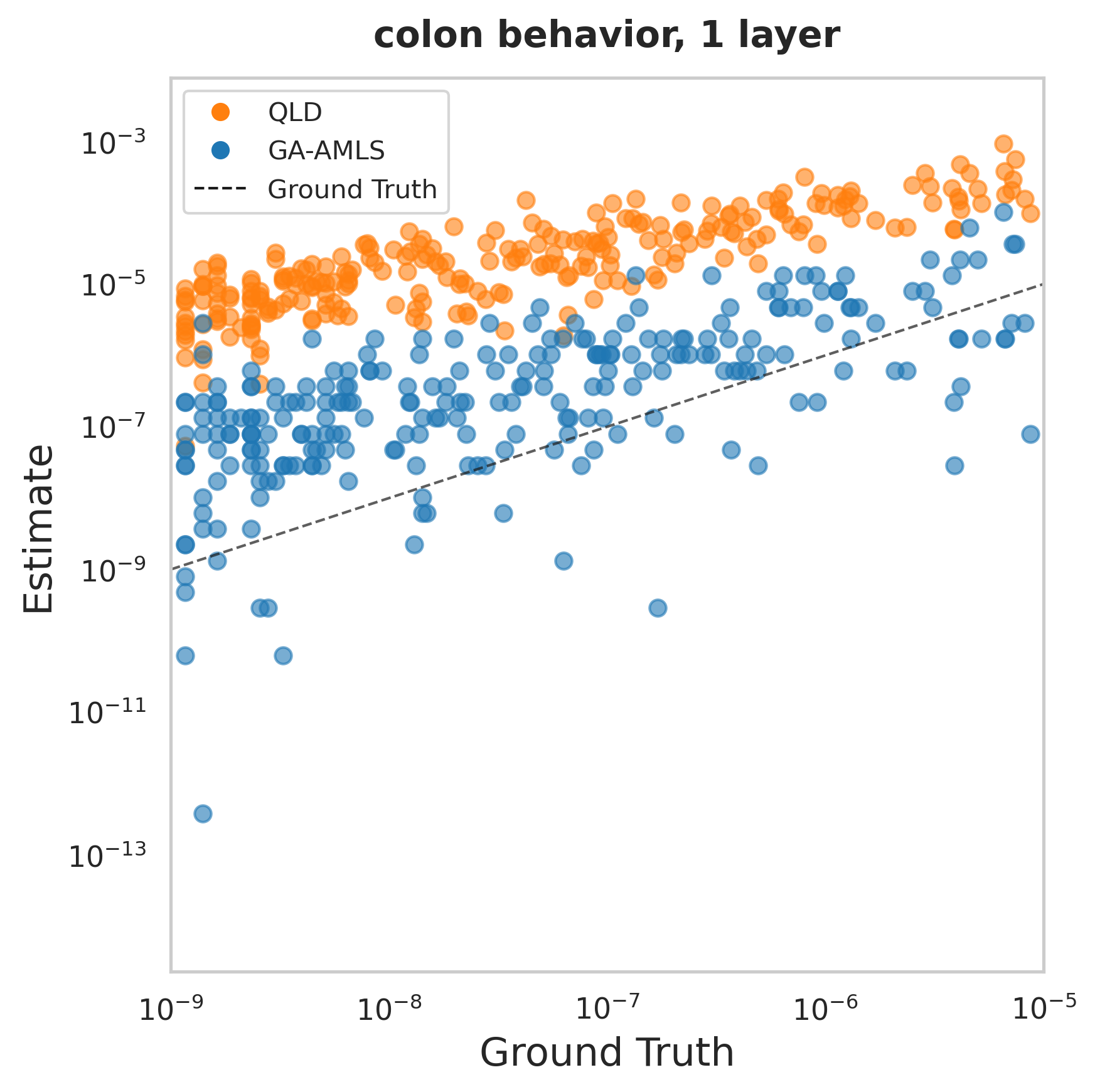}
        \caption{GA-AMLS and QLD estimates on the colon distribution and 1-layer model. Each point represents a different target token.}
        \label{fig:ga-amls-overview}
    \end{subfigure}

    \caption{Overview of the low-probability estimation setup and GA-AMLS estimates.}
    \label{fig:overview}
\end{figure}

In this paper, we address both the algorithmic and evaluative limitations of current LPE approaches through two coupled contributions: an activation-space estimation algorithm, and a numerically stable evaluation metric that makes comparing all estimators, including those whose estimates collapse to zero, well-posed.

First, we propose \textit{Gradient Activation Adaptive Multi-Level Splitting} (GA-AMLS), which adapts the Adaptive Multi-Level Splitting (AMLS) algorithm \citep{article} to operate entirely within the continuous activation space of language models. While \citet{webb_statistical_2019} previously applied AMLS to computer vision models in the input space, ours is, to our knowledge, the first method to apply it in the activation space of language models. Figure~\ref{fig:ga_amls_algorithm} gives a schematic overview of GA-AMLS, while Figure~\ref{fig:ga-amls-overview} previews its comparison with the strongest activation-space baseline, QLD, on a representative distribution.
By working in activation space, we can estimate tail probabilities without the expensive full backward passes required by Importance Sampling. We also avoid needing to compute the input sequence probability under the original distribution, which Importance Sampling methods require for importance weights.
Instead, we decompose the search for activations that trigger the target token, into a sequence of intermediate levels, using a gradient-based Metropolis-adjusted Langevin algorithm (MALA) kernel \citep{robertsTweedi1996,srinivasan_high-accuracy_2025} to guide the sampler within each level. The MALA kernel's drift term allows the evolved activations to stay ``close'' to the typical activations of the input distribution, while the adaptive levels progressively steer towards the failure region. This replaces QLD's independence-based recombination with conditional sampling under an explicit prior. Idealized AMLS is an unbiased estimator of the tail probability of its sampling distribution \citep{cerou_adaptive_2019}; because our MCMC kernel targets a fitted activation prior rather than the true activation distribution, GA-AMLS inherits this guarantee only up to prior mismatch (Section~\ref{sec:discussion}).

Second, we introduce \emph{Shifted Power Bregman} (SPB) Loss, a family of proper
scoring-rule divergences designed for evaluating rare-probability estimates.
SPB remains finite when an estimator returns exactly zero, supports tunable
asymmetry between underestimation and overestimation through an interpretable
asymmetry parameter, and approximately preserves scale invariance across several
orders of magnitude of the ground-truth probability. The unshifted power family
also recovers standard losses as special cases, including squared error and the
Itakura--Saito loss used by \citet{wu2025estimatingprobabilitiesrareoutputs}
(Appendix \ref{app:standard_loss_connections}). Thus SPB Loss is a strict
generalization of existing evaluation losses, with an explicit finiteness
correction and a tunable asymmetry knob. A comparison with standard losses is
presented in Table \ref{tab:loss_comparison}.

Empirically, the estimator ranking depends on the evaluation cost model. Under
approximately symmetric penalties, including log-space squared error and
corresponding SPB settings, GA-AMLS has lower bias and outperforms the baselines.
Under strongly asymmetric penalties that heavily punish underestimation,
including Itakura--Saito loss and corresponding SPB settings, QLD can outperform
GA-AMLS because its positive bias acts as a conservative hedge against false
negatives. This rank inversion is visible under pre-existing metrics as well as
under SPB; SPB makes the tradeoff explicit while avoiding the numerical
instability caused by zero estimates. Consequently, the appropriate estimator
depends on the deployment context: conservative catastrophic-risk audits may prefer
overestimation, whereas applications sensitive to excessive false alarms may
prefer the lower-bias GA-AMLS diagnostic.

\begin{figure}[t]
    \centering
    \includegraphics[
        width=0.78\linewidth,
        keepaspectratio,
    ]{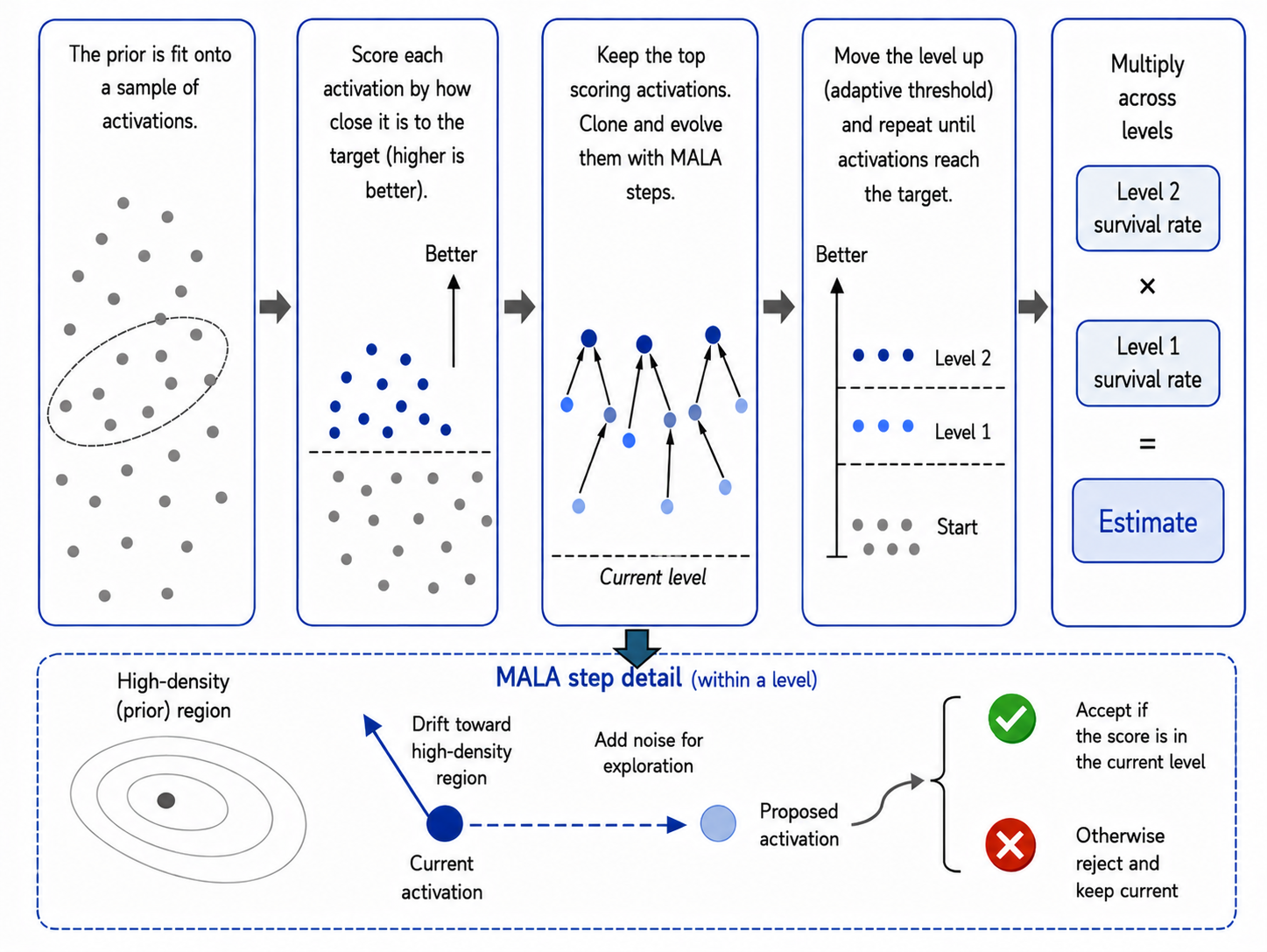}
    \vspace{-0.5em}
    \caption{\textbf{GA-AMLS in activation space.} 
The rare-event probability is decomposed into a product of conditional
probabilities over adaptive score levels. At each level, activations below the
threshold are discarded and survivors are resampled. MALA rejuvenation uses
the fitted activation-prior gradient to improve mixing within
the constrained level.}
    \label{fig:ga_amls_algorithm}
\end{figure}

\begin{table}[!htbp]
\centering
\renewcommand{\arraystretch}{1.25}
\setlength{\tabcolsep}{5pt}

\definecolor{navyblue}{RGB}{18,40,160}

\newcommand{\cmark}{\textcolor{green!50!black}{\ding{51}}}
\newcommand{\xmark}{\textcolor{red!75!black}{\ding{55}}}

\begin{tabular}{lcccc}
\toprule
\textbf{Property} &
\textbf{Squared Error} &
\textbf{IS Loss} &
\textbf{Log Squared Error} &
\textcolor{navyblue}{\textbf{SPB Loss}} \\
\midrule

Scale invariance
& \xmark
& \cmark
& \cmark
& \cmark\ \(\approx\) exact \\

Defined when estimate \(=0\)
& \cmark
& \xmark
& \xmark
& \cmark \\

Asymmetric: FN \(>\) FP penalty
& \xmark
& \cmark
& \xmark
& \cmark\ tunable via \(\alpha\) \\

Proper scoring rule
& \cmark
& \cmark
& \xmark
& \cmark \\

\bottomrule
\end{tabular}
\caption{SPB Loss is the only loss satisfying all four desiderata for LPE evaluation: scale invariance, finiteness at zero estimates, tunable asymmetry between underestimation and overestimation penalties via $\alpha$, and strict propriety.}
\label{tab:loss_comparison}
\end{table}




\section{Background and Problem Setup}

\subsection{Problem Setup}
\textbf{Low-probability estimation.}
We adopt the low-probability estimation (LPE) setting of
\citet{wu2025estimatingprobabilitiesrareoutputs}. Let $M$ be a language model
with vocabulary $\mathcal V$, and let $x \sim \mathcal D$ be an input sequence
drawn from a specified input distribution. We consider deterministic argmax
decoding: $M(x)$ is the token with maximal output logit. For a target token
$t \in \mathcal V$, the rare-event probability of interest is $q_t = \Pr_{x \sim \mathcal D}[M(x)=t].$
In our experiments, $q_t$ lies in the extreme-tail regime
$10^{-9} \le q_t \le 10^{-5}$, where naive Monte Carlo sampling is infeasible.

\textbf{Activation-space formulation.}
GA-AMLS operates on internal activations rather than discrete text. Let
$f:\mathcal V^* \to \mathbb R^d$ map an input sequence to its final-layer
pre-LayerNorm, pre-unembedding activation $a=f(x)$. Given unembedding matrix
$W_U$, the logits are
$z(a) = \mathrm{LayerNorm}(a) W_U.$
The event $M(x)=t$ can therefore be written as an activation-space constraint:
$
    q_t
    = \Pr_{x \sim \mathcal D}
    \left[z_t(f(x)) > \max_{i \neq t} z_i(f(x))\right].
$

\textbf{Existing estimators.}\label{sec:background_methods}
We compare against the methods introduced by
\citet{wu2025estimatingprobabilitiesrareoutputs}. Input-space importance
sampling methods such as ITGIS and MHIS search over discrete token sequences and
reweight samples to estimate $q_t$. These estimators can be unbiased in
principle, but empirically in the extreme-tail regime they often fail to find triggering
inputs, producing zero estimates. QLD avoids this failure
mode by operating in activation space. It whitens pre-unembedding
activations, decomposes each activation as \(u=a+b\) into a target-direction
component and an orthogonal residual, and estimates \(q_t\) by recombining components \(a^{(i)}+b^{(j)}\). This converts \(n\) activations into
\(n^2\) candidate activations, but relies on treating the whitened components as
independent. Since whitening guarantees uncorrelatedness rather than
independence except under stronger assumptions such as joint Gaussianity,
non-Gaussian activation structure can introduce bias.

\subsection{Sampling Methods}
\label{sec:sampling_methods}

\textbf{The Adaptive Multilevel Splitting (AMLS) Algorithm.}
Adaptive Multi-Level Splitting (AMLS) is a rare-event Monte Carlo method that
estimates a small probability by decomposing it into a product of larger
conditional probabilities \citep{article,cerou_adaptive_2019,webb_statistical_2019}. Let $V \sim G$ be a random variable, let
$s:\mathbb R^d \to \mathbb R$ be a score function, and let the rare event be
$\{s(V)\geq \tau\}$. For levels
$-\infty=L_0<L_1<\cdots<L_K=\tau$,
$
    \Pr(s(V)\geq \tau)
    =
    \prod_{k=1}^K
    \Pr(s(V)\geq L_k \mid s(V)\geq L_{k-1}).
$

AMLS estimates these factors using a population of $N$ particles. At each level,
the threshold $L_k$ is chosen adaptively from the empirical score distribution;
particles below $L_k$ are discarded, survivors are resampled, and the resulting
duplicates are rejuvenated with an MCMC kernel targeting
$G(\cdot \mid s(V)\geq L_k)$. The final estimate is
$
    \hat q_{\mathrm{AMLS}}
    =
    \prod_{k=1}^K
    \hat p_k,
    \qquad
    \hat p_k
    =
    \frac{1}{N}
    \sum_{i=1}^N
    \mathbf 1\{s(V_i)\geq L_k\}.
$
Thus, AMLS replaces one extremely rare event with a sequence of more frequent
conditional events.

\textbf{MALA Proposal Kernel.}
The efficiency of AMLS depends on the MCMC kernel used to rejuvenate particles.
We use the Metropolis-adjusted Langevin algorithm (MALA), which proposes moves
using local gradient information. For a target density $\pi$, MALA proposes \citep{robertsTweedi1996,srinivasan_high-accuracy_2025}
$
    x' = x + h \nabla_x \log \pi(x) + \sqrt{2h}\,\xi,
    \qquad
    \xi \sim \mathcal N(0,I),
$ and accepts or rejects the proposal using the standard Metropolis-Hastings
correction.

\section{Gradient Activation AMLS (GA-AMLS) for Estimating Rare Events }
\label{sec:method}


We adapt the general AMLS algorithm to work in the activation space of language models, and use a MALA kernel to guide the search (Section~\ref{sec:sampling_methods}). We call this method Gradient Activation AMLS (GA-AMLS), presented in Appendix~\ref{sec:pseudocode} as Algorithm~\ref{alg:ga-amls} and outlined below. 

\textbf{Notation}. Let \(a \in \mathbb{R}^d\) denote the raw pre-LayerNorm pre-unembedding activation, and let \(u\) denote its whitened coordinate. Given mean \(\mu\) and covariance \(\Sigma\), choose a de-whitening matrix \(A\) such that \(AA^\top=\Sigma\), and define $
u = (a-\mu)A^{-\top},
 a(u) = uA^\top+\mu .
$
GA-AMLS and the MALA kernel operate over \(u\), while model logits are evaluated by inserting the corresponding unwhitened activation \(a(u)\) back into the model.

\textbf{Prepare the Activation Space}: We use activations from the pre-LayerNorm pre-unembedding layer, and whiten them. 
Raw transformer activations \(a\) are characteristically anisotropic, with a few
outlier dimensions whose scale, in magnitude and in across-distribution
variance, is orders of magnitude larger than the rest. This outlier structure
is documented in large language models \citep{sun2024massive,dettmers2022llmint88bitmatrixmultiplication};
we verify empirically that it also holds at the smaller models we study in the
Appendix in Figure~\ref{fig:anisot}. To improve mixing of MCMC kernels in this high-dimensional space, we operate in whitened coordinates \(u=(a-\mu)A^{-\top}\). This global preconditioning makes isotropic Euclidean steps in \(u\)-space correspond to Mahalanobis-scaled steps in the original activation space. We sample pre-LayerNorm activations so that the evolved activations do not need to satisfy the LayerNorm constraint directly.

\textbf{Define the Score function}: We define the score function ${s}$ that directly describes how ``close'' an activation is to an activation producing the target token. 
Given whitened pre-LayerNorm pre-unembedding activation $\textbf{u}$ as input, $A$ as the de-whitening matrix, $\mu$ as the mean, $t$ as the target token index, the score function computes:

\[
s(\textbf{u})
=
\Bigl(\mathrm{LayerNorm}(\textbf{u}A^\top+\mu)\,W_U\Bigr)_t
-
\max_{j\neq t}\Bigl(\mathrm{LayerNorm}(\textbf{u}A^\top+\mu)\,W_U\Bigr)_j.
\]

That is, $s(\textbf{u})$ is the margin by which the target token's logit exceeds the maximum logit among all other tokens.
Using the general AMLS algorithm from Section \ref{sec:sampling_methods} , we define the rare event as $P\bigl(s(\textbf{u}) \ge \tau\bigr)$. In our setting $\tau = 0$, since this is when the logit for the target token $t$ is the largest, and thus $t$ will be the next token predicted under our assumption of argmax sampling.

\textbf{Determine the prior distribution:}
The prior \(\pi\) is defined over whitened activations \(u\) and is used to guide the MCMC kernel. In whitened space, we use a
diagonal product Student-\(t\) prior. This captures heavier-than-Gaussian marginal tails. This choice is supported by the tuning ablations in Appendix~\ref{sec:hyperparameters}, where the Student-\(t\) prior outperforms a Gaussian prior.

\textbf{Choose the MCMC kernel $K$}: To efficiently evolve the clones' activations within the high-dimensional activation space, we use the Metropolis-adjusted Langevin algorithm (MALA) as outlined in Section \ref{sec:sampling_methods}, with the below correction to take into account the constraint induced by the score mechanism of AMLS. 

We must strictly enforce the level constraints $s(\textbf{u}) \ge L_i$. This is addressed through a Metropolis-Hastings filter.

The proposal density $q(\textbf{u}' \mid \textbf{u})$ is Gaussian, while the prior distribution is Student's $t$ distribution.

The acceptance probability $r$ is computed in two stages to handle the hard constraint:
\begin{enumerate}[leftmargin=*]
    \item Hard score check: If $s(\textbf{u}') < L_i$, the proposal is invalid and immediately rejected ($r = 0$);
    \item Ratio Computation: If valid, we compute the standard Metropolis Hastings (MH) acceptance ratio $r$:
    \begin{equation*}
        r = \min\left(1, \frac{\pi(\textbf{u}') \, q(\textbf{u} \mid \textbf{u}')}{\pi(\textbf{u}) \, q(\textbf{u}' \mid \textbf{u})}\right).
    \end{equation*}
\end{enumerate}
We accept the proposal $\textbf{u}'$ with probability $r$; otherwise, the activation remains at $\textbf{u}$.

Standard Random Walk Metropolis-Hastings (RWMH) proposes isotropic steps blind to the geometry of the distribution, leading to slow mixing in this high dimensional activation space. MALA improves upon RWMH by using gradient information from the underlying prior distribution to guide the proposal towards the activation space's typical set for the specific input domain $\mathcal{D}$. In GA-AMLS this MALA kernel is applied to the
restricted target $
    \pi_k(u)
    \propto
    \pi(u)\,\mathbf 1\{s(u)\ge L_k\}, $
so proposals below the current level \(L_k\) are rejected by the
MH filter. Thus the Langevin drift improves exploration within
the current constrained level set, while the adaptive thresholds \(L_k\)
move the population toward the rare target-token regions. This choice is also supported by the tuning ablations in Appendix~\ref{sec:hyperparameters}, where MALA outperforms RWMH despite using half as many MH proposal steps.

To ensure optimal mixing, we adapt the step size $h$ (from MALA under \ref{sec:sampling_methods}) during a burn-in period of $T_{\text{burn}}$ steps. We employ a multiplicative update rule to tune $h$ toward a target acceptance rate of $r^* = 0.57$, which is theoretically optimal for MALA in high dimensions \citep{roberts_optimal_1998}:
    $h_{t+1} = h_{t} \cdot \exp\left( \omega (\bar{r}_t - r^*) \right)$,
where $\omega=0.1$ is the adaptation rate and $\bar{r}_t$ is the empirical acceptance rate at step $t$. After $T_{\text{burn}}$ steps, $h$ is frozen to ensure the Markov chain satisfies detailed balance condition during the sampling.

\section{Shifted Power Bregman (SPB) Loss for Evaluation}

We introduce Shifted-Power Bregman (SPB) Loss, a proper scoring rule for evaluating
rare-probability estimates. SPB remains finite when an estimator returns zero
and provides tunable asymmetry between underestimation and overestimation. We define
both the pointwise loss and a dataset-level aggregation that controls scale dependence
across rare-event probabilities.

\textbf{Desiderata for an ideal loss for LPE}. To evaluate an estimator for LPE, an ideal loss would satisfy the following requirements:

\textit{Scale Invariance:} As mentioned in \cite{wu2025estimatingprobabilitiesrareoutputs}, it should penalize relative errors similarly across the scale of ground truth probabilities, as we care about performance across a wide range of rarities (e.g., $10^{-5}$ to $10^{-9}$). 

\textit{Asymmetric Sensitivity:} It should support tunable asymmetry between underestimation and overestimation penalties, as different deployment contexts may prioritize avoiding underestimation over overestimation, or vice versa.

\textit{Numerical Stability:} It must be defined when the estimate is exactly zero, avoiding the need for calibration steps that require ground truth knowledge.

\textit{Statistical Rigor:} It should be the divergence of a proper scoring rule, so that an estimator minimizing expected loss is incentivized to report its unbiased best estimate rather than a systematically distorted one \citep{gneiting_strictly_2007}.

The Itakura--Saito (IS) loss used in \cite{wu2025estimatingprobabilitiesrareoutputs} satisfies most criteria but fails on numerical stability: it is undefined (infinite) when the estimate is $0$. \cite{wu2025estimatingprobabilitiesrareoutputs} overcome this by fitting an affine transformation $x \mapsto ax^c + b$ to the outputs, where parameters are chosen to minimize IS loss via leave-one-out-cross-validation (LOOCV) for each method, input, distribution pair. A significant drawback is that this transformation relies on knowing the ground truth probabilities to calculate the IS loss which is to be minimized during this calibration. In a realistic deployment setting, establishing this ground truth for rare tokens is computationally infeasible using naive sampling, as estimating these probabilities is the very problem we are trying to solve. SPB uses ground truth only as the reference value in the final score, as any evaluation metric does. By contrast, the affine calibration used in prior evaluation procedures assumes that ground-truth probabilities are available to \emph{transform each estimator's outputs} before scoring. This is a stronger requirement than ordinary evaluation against ground truth, since such calibration information would generally not be available in deployment.

\textbf{Pointwise SPB Loss}. We propose pointwise Shifted-Power Bregman (SPB) Loss, derived from the framework of proper scoring rules. It overcomes this limitation by accepting a trade-off: if the estimate is $0$, the loss will not be infinite, but rather capped by a large finite constant. We also introduce  parameters $\alpha$: to control the degree of asymmetric sensitivity, and $\gamma$: a dataset level rarity premium.

Following \cite{buja_loss_nodate} , every smooth proper scoring rule for binary probability estimation induces a Bregman divergence determined by a nonnegative weight function $\omega(t) \ge 0$ on $(0,1)$. If $q$ denotes the true probability and $p$ the forecast, the associated divergence can be written as $
    B(q \mid p) = \int_q^p (t - q) \omega(t) \, dt.
$

We define the shifted-power weight function as: 
    $\omega_\varepsilon(t) = (t + \varepsilon)^{-\alpha}$, where $\alpha > 0, \varepsilon > 0$.
The shift parameter $\varepsilon$ ensures the penalty remains finite at $p=0$. 
Geometrically, this integral accumulates cost along the one-dimensional
probability axis between the truth $q$ and the prediction $p$. At an
intermediate probability level $t$, the local contribution is the distance
from the truth, scaled by the weight $\omega_\varepsilon(t)=(t+\varepsilon)^{-\alpha}$.
Since this weight is largest near zero, errors that push the prediction
toward smaller probabilities are penalized more strongly. For an overestimate
$p=mq$, the loss integrates over $t\in[q,mq]$, where the weight decreases as
$t$ increases. For a reciprocal underestimate $p=q/m$, the loss integrates
over $t\in[q/m,q]$, which includes smaller probabilities where
$\omega_\varepsilon(t)$ is larger. Increasing $\alpha$ steepens this growth
near zero and therefore increases the relative penalty for underestimation
compared with overestimation. Thus, $\alpha$ provides a tunable asymmetry
parameter, as formalized in Theorem~\ref{thm:asymmetry}.

\textbf{Dataset-Level SPB Loss and the Rarity Premium $\gamma$.} To evaluate an estimator across a diverse set of rare events, we aggregate the pointwise divergences into a single dataset-level loss. As derived in Appendix~\ref{sec:AppendixA-Derivations}, the \emph{unshifted} pointwise divergence ($\varepsilon=0$) scales exactly proportionally to $q^{2-\alpha}$ for a fixed relative error $m=p/q$. While the shift parameter $\varepsilon > 0$ slightly perturbs this geometry, the shifted divergence $B_\varepsilon(q \mid p)$ remains dominated by this leading-order $q^{2-\alpha}$ scaling. Left uncorrected, an aggregate loss would be dominated by the absolute scale of the ground-truth probabilities rather than the relative accuracy of the estimators. 

To neutralize this and establish controlled scale invariance, we define the dataset-level SPB Loss as the arithmetic mean\footnote{Crucially, we divide by the sample size $n$ rather than the sum of the weights $\sum_i w(q_i)$. Since the weight $w(q)$ is specifically designed to perform a pointwise cancellation of the $q^{2-\alpha}$ scale factor, applying a normalized weighted average would reintroduce a dependence on the empirical distribution of the ground truth probabilities $q$, undoing the cancellation.} of the \emph{weighted} pointwise divergences across $n$ samples: $
    \mathcal{L}_{\text{SPB}} \;=\; \frac{1}{n}\sum_{i=1}^n w(q_i)\, B_\varepsilon(q_i \mid p_i).
    \label{eq:spb_dataset_metric}
$
We define the weight function as:
    $w(q) \;=\; q^{\,\alpha - 2 - \gamma}$, where $\gamma \geq 0$.
The parameter $\gamma$ allows us to toggle between two distinct evaluation modes:

\textit{Scale Neutrality ($\gamma = 0$):} By setting $\gamma = 0$, the weight becomes $q^{\alpha-2}$. This term cancels the leading $q^{2-\alpha}$ scaling of the divergence. Consequently, the final metric approximates scale invariance, depending almost entirely on the relative multiplicative error ($m = p/q$) rather than the absolute rarity of the event. The residual distortion due to $\varepsilon$ is formalized and bounded in the Appendix in Theorem~\ref{thm:scale}.

\textit{The Rarity Premium ($\gamma > 0$):} In specific safety-critical deployments, evaluating performance symmetrically across scales is less important than severely punishing errors on the deepest tail events. Setting $\gamma > 0$ intentionally breaks scale neutrality by introducing an extra $q^{-\gamma}$ factor. This acts as a tunable \emph{rarity premium}, disproportionately upweighting the smallest-$q$ events in the aggregate score.

\paragraph{Relation to other Losses} The unshifted power family ($\varepsilon=0$) unifies standard losses: $\alpha=0$ recovers squared error and
$\alpha=2$ is \emph{exactly} the Itakura--Saito loss (Appendix \ref{app:standard_loss_connections}). Therefore SPB comes from a strict generalization of the
metric used by \citet{wu2025estimatingprobabilitiesrareoutputs}, adding
finiteness at zero estimates (with $\varepsilon>0$)  and a tunable asymmetry knob.

\subsection{Practical parameter selection ($\alpha$, $\gamma$, $\varepsilon$)}\label{sec:param_guidance}

The metric has three user-facing parameters:  1) $\alpha$ sets the FN-vs-FP asymmetry: $\alpha > 1.5$ penalizes underestimation more than overestimation and $\alpha < 1.5$ the reverse (Theorem~\ref{thm:asymmetry}, Eq.~\ref{eq:eps_asym_thm} in Appendix; fine-grained control of $\alpha$ is described in Appendix~\ref{sec:choose_alpha}). 2) $\gamma \ge 0$ is an optional rarity premium: $\gamma = 0$ gives scale invariance up to a tolerance $\eta$ (Theorem~\ref{thm:scale} in Appendix), while $\gamma > 0$ multiplies each example's contribution to the aggregate loss by $q^{-\gamma}$, so at a fixed relative error a $10\times$ rarer event carries $10^{\gamma}\times$ the weight. 3) $\varepsilon$ is the floor that makes $p = 0$ finite at the cost of controlled distortion; a heuristic default is $\varepsilon = 0.01\, q_{\min}/m_{\max}$, and a guaranteed choice is any fraction of the bound given in Appendix by Corollary~\ref{cor:simultaneous} (interpretation in Appendix~\ref{sec:choose_eps}). Table~\ref{tab:alpha_gamma_use_cases} (Appendix~\ref{sec:param_guidance_table}) suggests settings by evaluation use case.

\section{Experimental Results}

This section empirically evaluates both components of our proposal. Section~\ref{sec:gaamls_real_data_exp} tests GA-AMLS on the real-data LPE benchmark of \cite{wu2025estimatingprobabilitiesrareoutputs}, comparing it against input-space Importance Sampling methods and QLD using both point-estimate plots and aggregate SPB Loss. Section~\ref{sec:spb_synth_data_exp} uses synthetic experiments to validate the behavior predicted by the SPB Loss theory, including approximate scale invariance, bounded distortion from the \(\varepsilon\)-shift, and tunable asymmetry between underestimation and overestimation.

\subsection{GA-AMLS: Real Data Experiments}
\label{sec:gaamls_real_data_exp}

\textbf{Dataset:} We use 8 distributions and the ground-truth probabilities as used in \cite{wu2025estimatingprobabilitiesrareoutputs}. They generated the ground-truth probabilities for each of the 8 distributions by running forward passes on $2^{32}$ random samples. We select a random set of 256 tokens among those with ground-truth probabilities between $10^{-9}$ and $10^{-5}$, and test the methods on these tokens.

\textbf{Methodology:} We follow the experimental setup of \citet{wu2025estimatingprobabilitiesrareoutputs}. To prevent overfitting, the method was only run on the first four distributions during development, and finalized before testing on the last four distributions. We test the methods on 3 models: a 1-layer, a 2-layer, and a 4-layer transformer from \citet{nanda2022transformerlens}. All models have a hidden dimension of $d = 512$, a vocabulary size of $|V| = 48262$, GELU non-linearities \cite{hendrycks2023gaussianerrorlinearunits}, and were trained on the C4 dataset \cite{raffel2023exploringlimitstransferlearning}  and CodeParrot \cite{tunstall2022natural}.  We use the Shifted-Power Bregman (SPB) Loss as our evaluation loss. By varying the asymmetry parameter $\alpha$ and the dataset-level rarity premium 
$\gamma$ , we demonstrate how different penalization profiles (detailed in Appendix in Table \ref{tab:alpha_gamma_use_cases}) alter the apparent ranking of the methods. For hyperparameter tuning details please refer to Appendix \ref{sec:hyperparameters}.

\begin{figure}[h]
    \centering
    \includegraphics[width=1\linewidth]{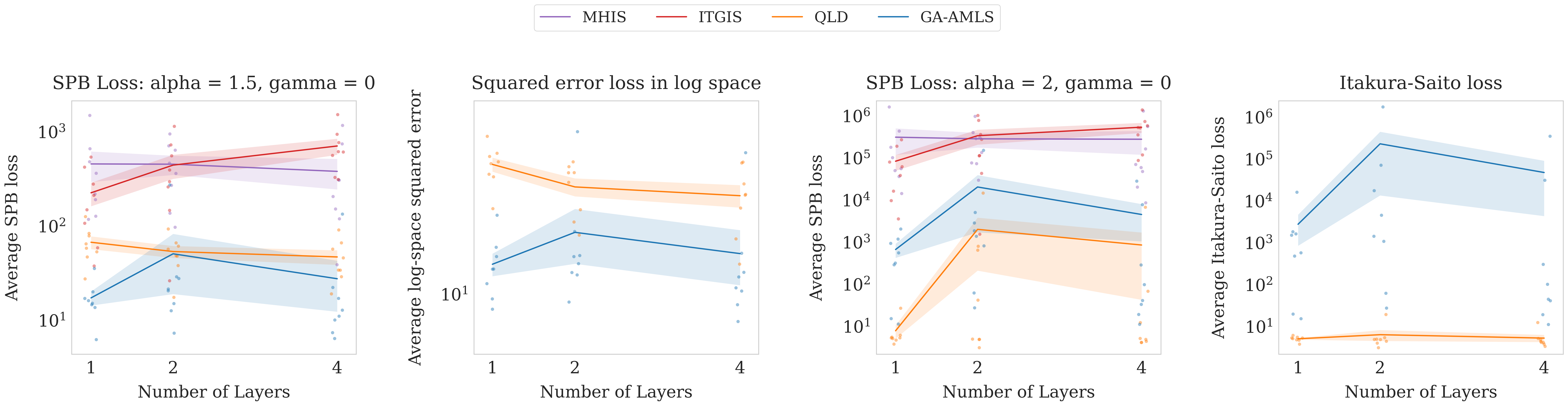}
    \vspace{-0.5em}
    \caption{\textbf{Multi-metric evaluation:} Plotted side-by-side, we observe a rank inversion depending on the metric's sensitivity to underestimation. Under symmetric penalties (SPB $\alpha = 1.5$
 and Squared Error in log space), GA-AMLS's lower bias leads to outperformance. Under severe asymmetric penalties (SPB $\alpha = 2$
 and IS Loss), QLD's strategy of systematic overestimation guards it against false-negative penalties, resulting in lower loss. SPB Loss safely evaluates input-space methods that collapse to zero, overcoming the numerical instability of uncalibrated IS loss and squared error loss in log space. (All plotted with log transformed y axis.)}
    \label{fig:metrics_compare}
\end{figure}

\textbf{Baselines:}
We compare against QLD, ITGIS and MHIS as described in Section \ref{sec:background_methods}.

\textbf{Performance under Symmetric Penalties:} 
\autoref{fig:metrics_compare} compares the overall evaluation losses of all methods across different model sizes grouped by symmetric and asymmetric metrics, while Figure~\ref{fig:SPB-dist} (Appendix~\ref{sec:loss_by_dist}) details the SPB loss for each of the 8 individual data distributions.
Under evaluation metrics that penalize overestimation and underestimation symmetrically, such as log-space Squared Error and SPB Loss with $\alpha = 1.5$, we observe that GA-AMLS significantly outperforms all baseline methods (\autoref{fig:metrics_compare}, first two panels). This outperformance is consistent across 7 of the 8 individual distributions (\autoref{fig:SPB-dist}).

\begin{figure}[h]
    \centering
    \includegraphics[width=1\linewidth]{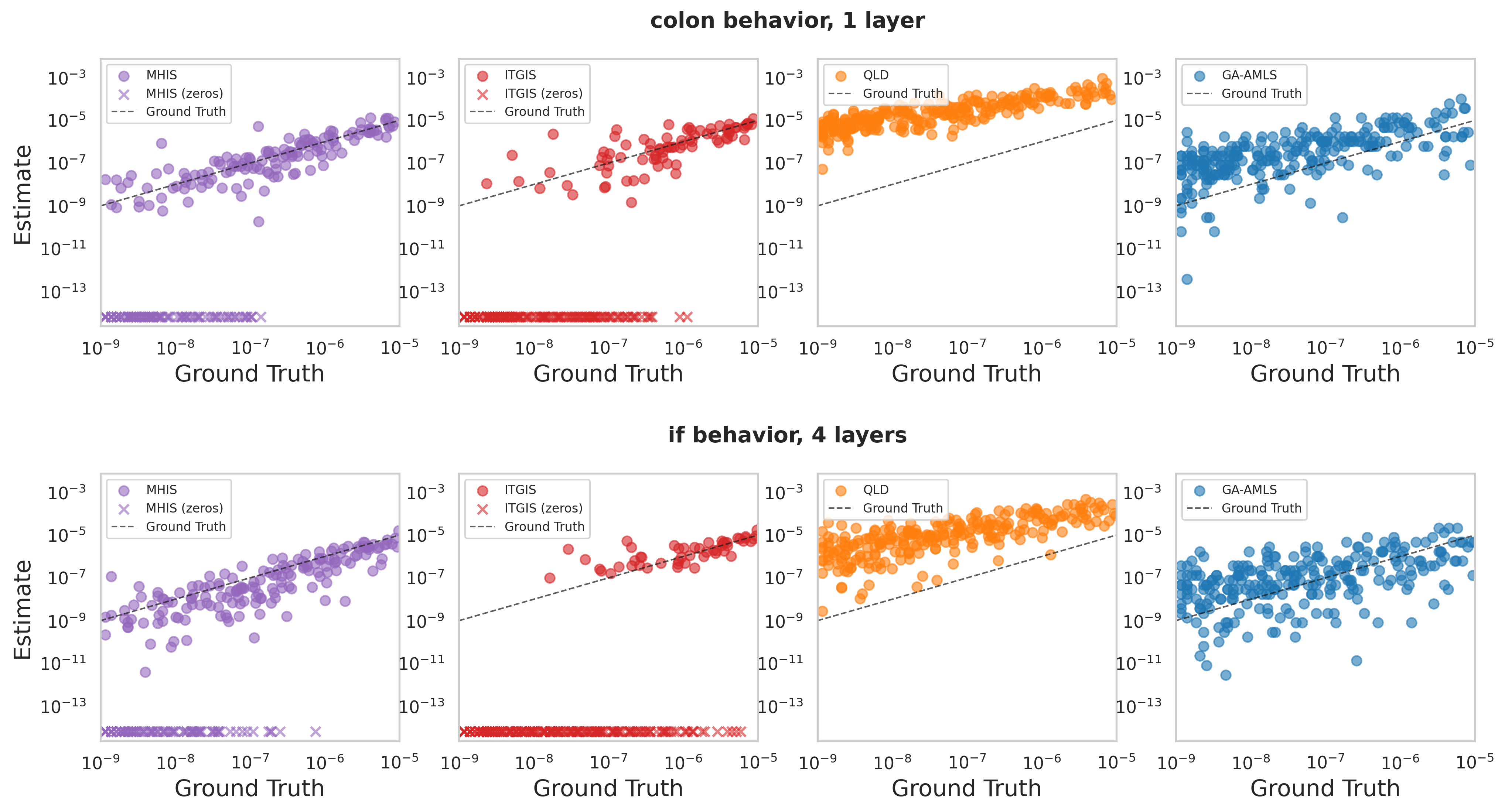}
    \vspace{-0.5em}
    \caption{\textbf{Examples of probability estimate vs ground truth on two input distributions and models.} Input Space Methods (ITGIS/MHIS) frequently fail to find valid triggers, causing their estimates to collapse to zero (placed at the bottom of each graph for visibility). While QLD shows a systematic overestimation bias, GA-AMLS centers closest to the ground truth, at the cost of higher variance.}
    \label{fig:2dist_eg}
    \vspace{-1.5em}
\end{figure}

\textbf{Performance under Asymmetric Penalties:}
Focusing on the latter half of \autoref{fig:metrics_compare}, the figure illustrates the methods' performance under asymmetric metrics that severely penalize underestimation (SPB Loss with $\alpha = 2$ and IS Loss).
When evaluating with these metrics, we observe a clear inversion in method rankings, where QLD outperforms GA-AMLS under these conditions. This occurs because QLD is biasing its estimates toward overestimation. While less accurate overall, this systematic positive bias acts as a natural safeguard against the severe false-negative penalties imposed by asymmetric losses.

\textbf{Deep-Tail Overestimation and Rarity Premium $\gamma$: }
\autoref{fig:2dist_eg} plots the methods' individual point estimates against the true probabilities to visualize estimator bias, and Figure~\ref{fig:Layers-SPB} (Appendix~\ref{sec:loss_by_dist}) shows how the aggregate SPB loss shifts across methods as the rarity premium $\gamma$ increases. 
The scatter plots (\autoref{fig:2dist_eg} and Appendix \ref{sec:all_scatterplots}) reveal that QLD's overestimation bias is not uniform; it becomes significantly more pronounced at the deepest tail (the rarest events). Consequently, as seen in \autoref{fig:Layers-SPB}, as we increase the dataset-level rarity premium ($\gamma > 0$) to disproportionately upweight the rarest events, the performance gap between QLD and GA-AMLS narrows, demonstrating the cost of QLD's deep-tail overbias.

\textbf{The Zero-Estimate Collapse and Bias-Variance Tradeoff:} 
The scatter plots of method estimates (\autoref{fig:2dist_eg} and Appendix \ref{sec:all_scatterplots}) also directly capture the distribution and structural failure modes of each estimator's predictions.
The distinct performance profiles of these methods stem from a fundamental bias-variance tradeoff, visually observed in these plots. The input-space Importance Sampling methods (MHIS/ITGIS) suffer from frequently collapsing to zero for the rarest events. QLD avoids zero-estimates but shifts the entire distribution of predictions upward. GA-AMLS shows lower bias and is more centred around the ground truth line, but exhibits higher variance across tokens than QLD (as seen visually in scatterplots in Appendix \ref{sec:all_scatterplots}).

\subsection{SPB Loss: Synthetic Data Experiments}
\label{sec:spb_synth_data_exp}

We now isolate the behavior of SPB Loss itself, independently of any estimator.
The goal of these synthetic experiments is to illustrate the loss geometry discussed in Section~\ref{sec:param_guidance} and formalized in Appendix~\ref{sec:spb_guarantees}: approximate scale invariance, controlled distortion from the finite shift \(\varepsilon\), and tunable asymmetry between underestimation and overestimation.

\textbf{Setup.}
We evaluate rare probabilities \(q\in[10^{-9},10^{-5}]\) on a log-spaced grid and we perturb each probability by fixed multiplicative factors. For scale-invariance experiments, predictions range from \(0.1q\) to \(10q\). For asymmetry experiments, we compare paired reciprocal errors: an overestimate/False-Positive (FP) \(p=mq\) and an underestimate/False-Negative (FN) \(p=q/m\), with \(m\) ranging up to \(1000\). We use the scale-neutral setting \(\gamma=0\).

For finite-shift \(\varepsilon>0\) plots, we compare two choices of \(\varepsilon\): one chosen according to the sufficient bound in Corollary~\ref{cor:simultaneous}, and a simple heuristic choice \(\varepsilon=0.01(q_{\min}/m_{\max})\). The tolerance for the scale-distortion plots is \(\eta=0.05\).

\begin{figure}[!htbp]
    \centering
    \setlength{\tabcolsep}{2pt}

    \begin{tabular}{@{}ccc@{}}
        \begin{subfigure}[!htbp]{0.325\textwidth}
            \centering
            \includegraphics[width=\linewidth]{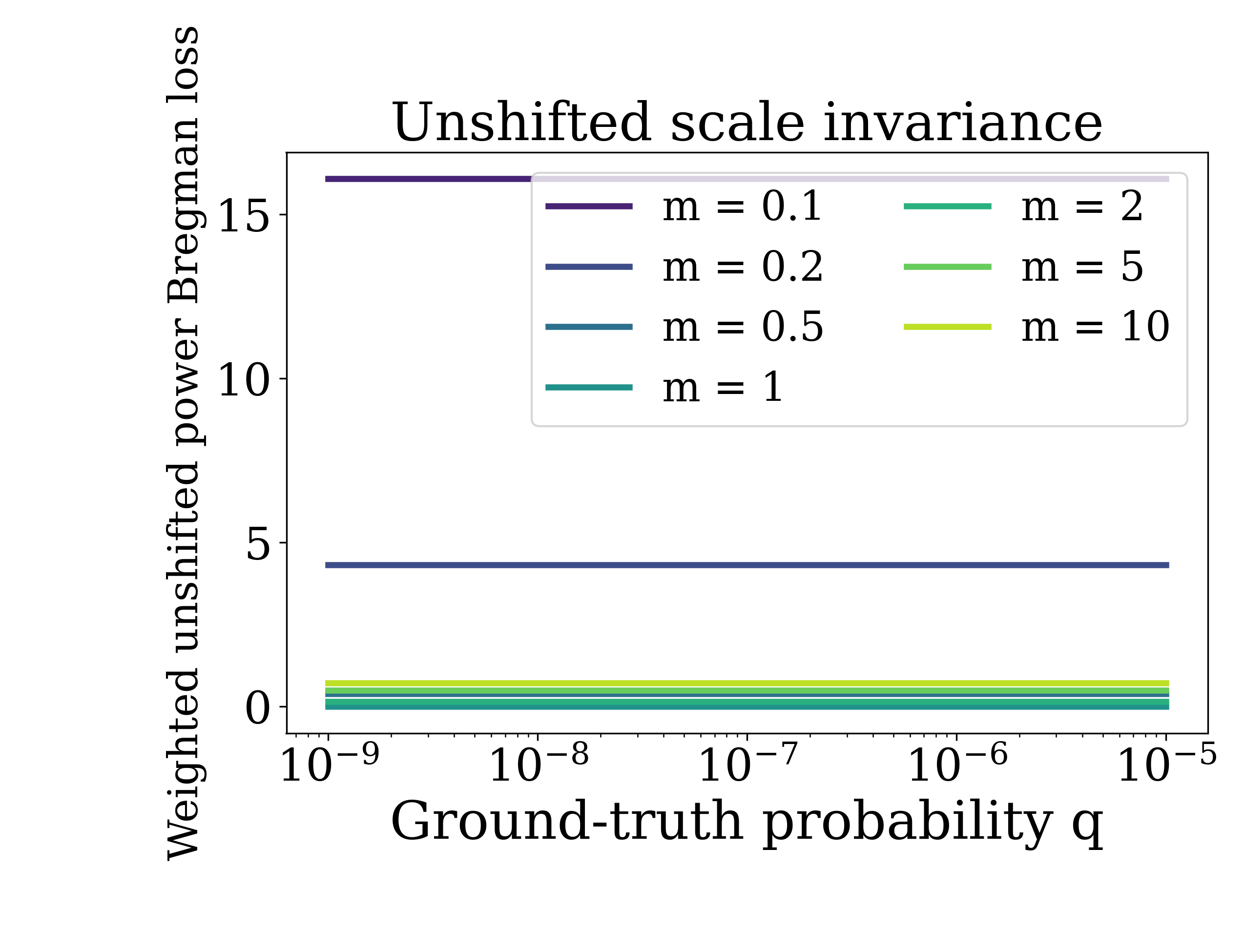}
            \caption{Unshifted scale invariance.}
            \label{fig:scaleinvar-a}
        \end{subfigure}
        &
        \begin{subfigure}[!htbp]{0.325\textwidth}
            \centering
            \includegraphics[width=\linewidth]{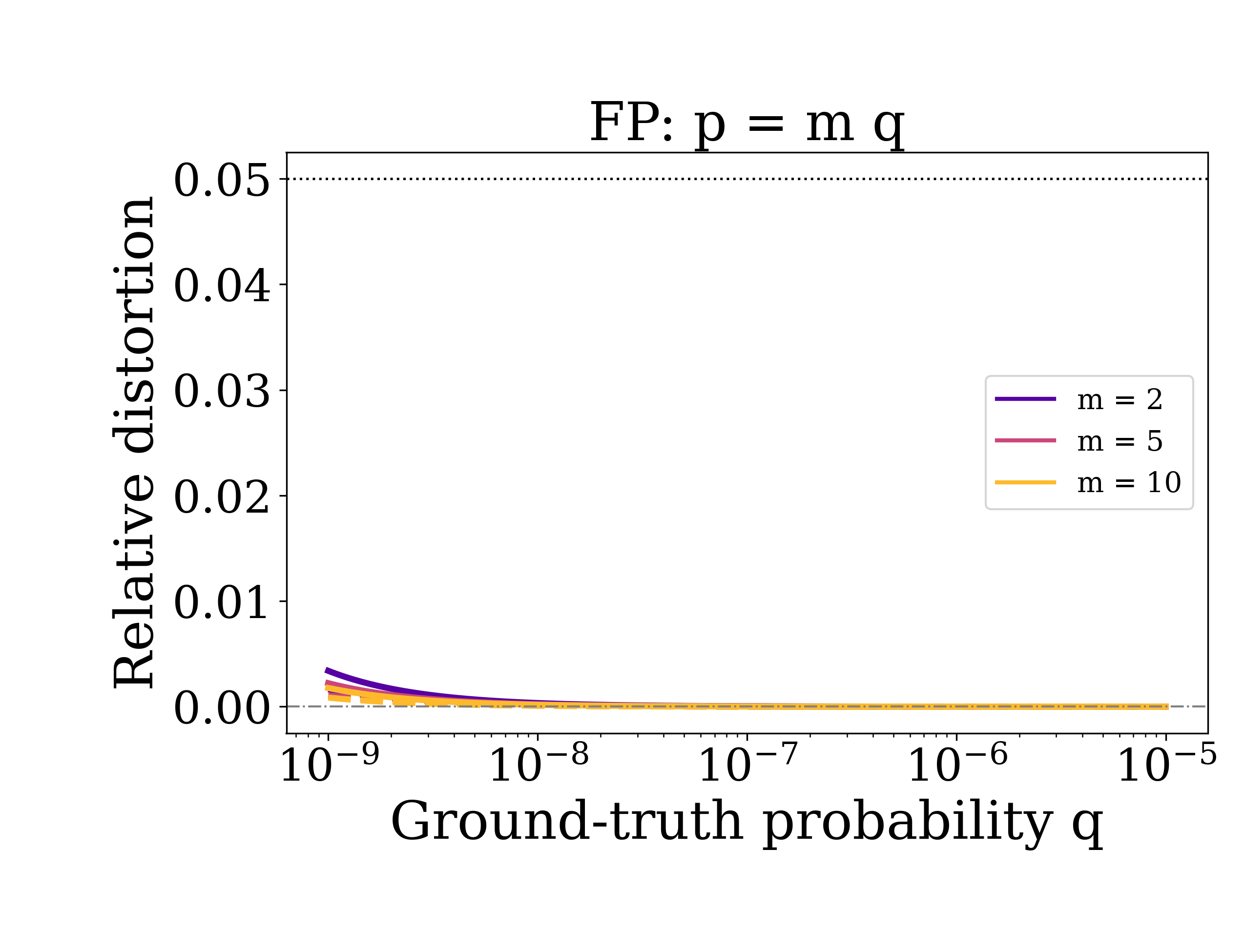}
            \caption{FP distortion.}
            \label{fig:scaleinvar-b}
        \end{subfigure}
        &
        \begin{subfigure}[!htbp]{0.325\textwidth}
            \centering
            \includegraphics[width=\linewidth]{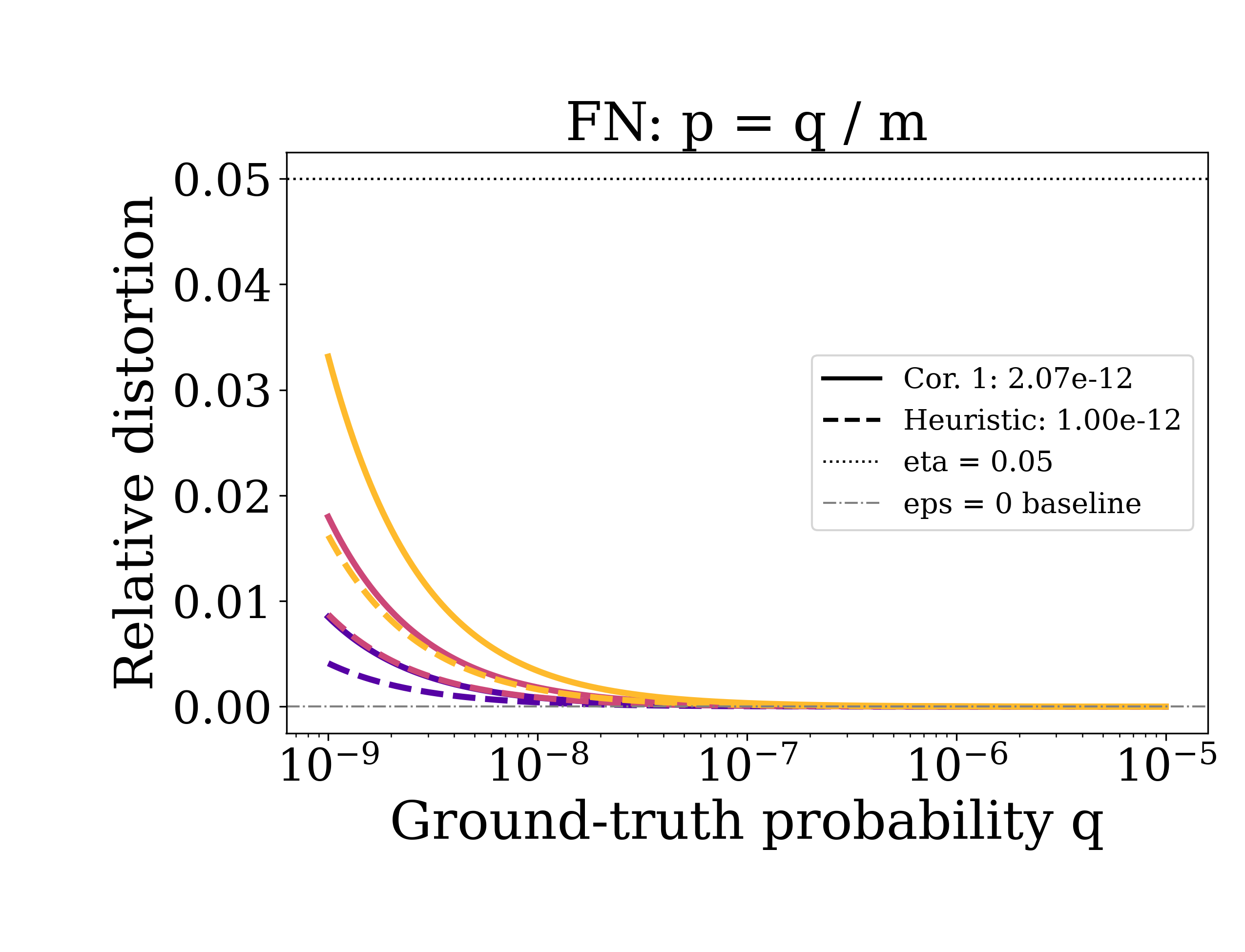}
            \caption{FN distortion.}
            \label{fig:scaleinvar-c}
        \end{subfigure}
    \end{tabular}
    \caption{\textbf{Scale invariance and finite-shift distortion of SPB Loss.}
    \textbf{(a)} With \(\varepsilon=0\), the weighted loss is flat across \(q\) for each fixed multiplicative error, showing exact scale neutrality.
    \textbf{(b)--(c)} Introducing \(\varepsilon>0\) makes the loss finite at zero estimates but slightly perturbs the unshifted geometry. The observed distortion remains below the target tolerance \(\eta=0.05\) for both overestimation (FP) and underestimation (FN).}
    \label{fig:ScaleInvar_distortion}
\end{figure}

\begin{figure}[!htbp]
    \centering
    \setlength{\tabcolsep}{2pt}

    \begin{tabular}{@{}ccc@{}}
        \begin{subfigure}[!htbp]{0.325\textwidth}
            \centering
            \includegraphics[width=\linewidth]{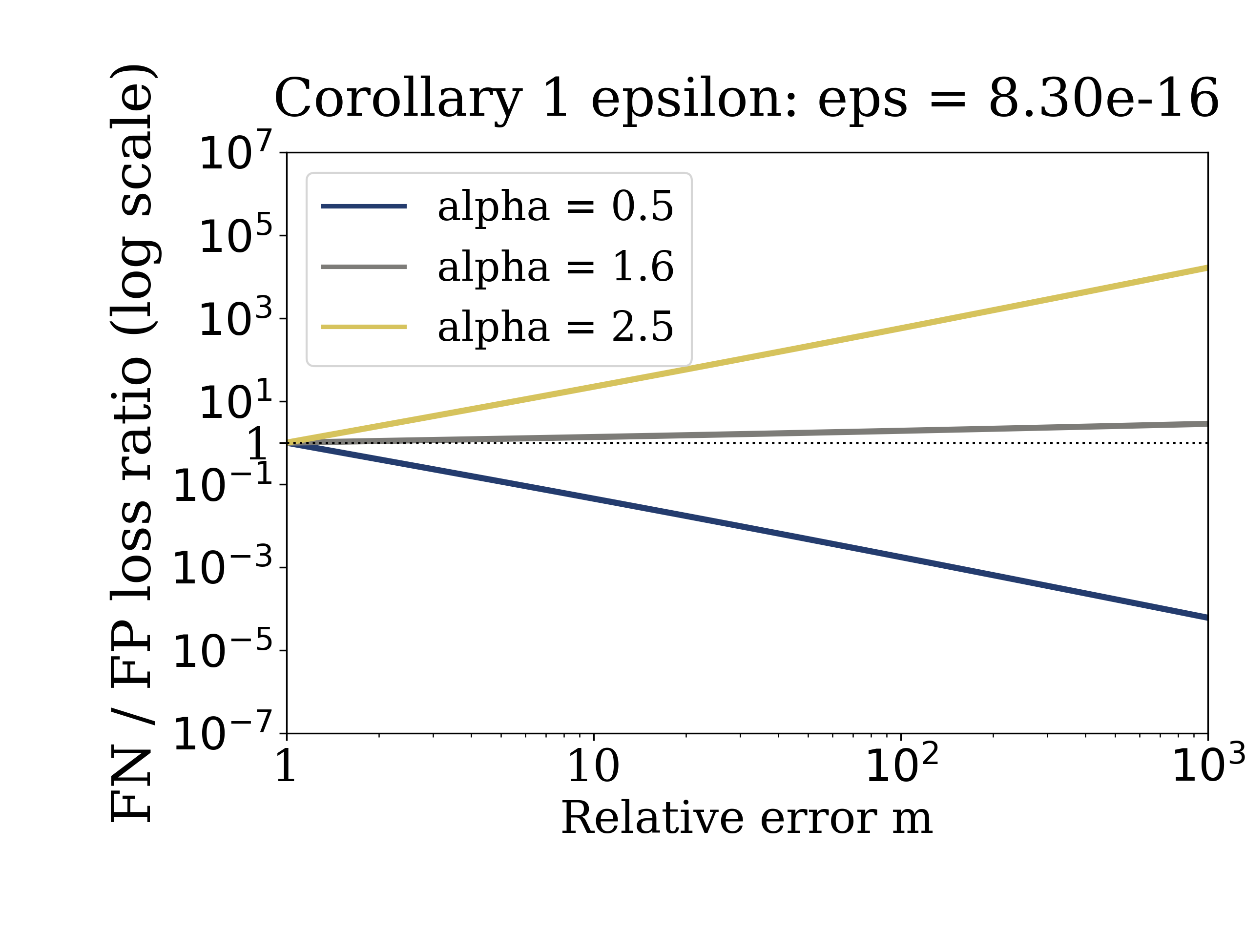}
            \caption{Corollary $\varepsilon$.}
            \label{fig:assymetry-a}
        \end{subfigure}
        &
        \begin{subfigure}[!htbp]{0.325\textwidth}
            \centering
            \includegraphics[width=\linewidth]{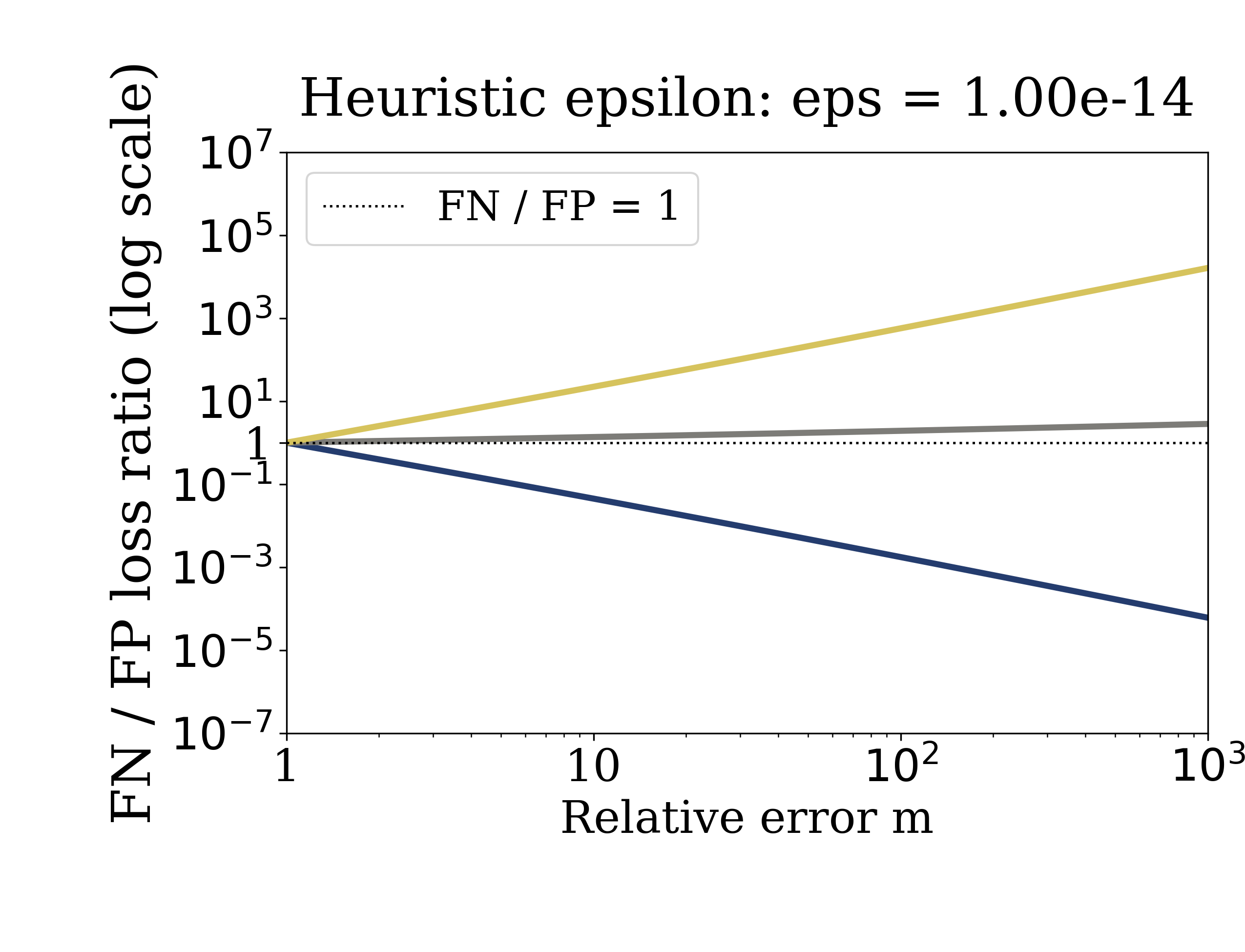}
            \caption{Heuristic $\varepsilon$.}
            \label{fig:assymetry-b}
        \end{subfigure}
        &
        \begin{subfigure}[!htbp]{0.325\textwidth}
            \centering
            \includegraphics[width=\linewidth]{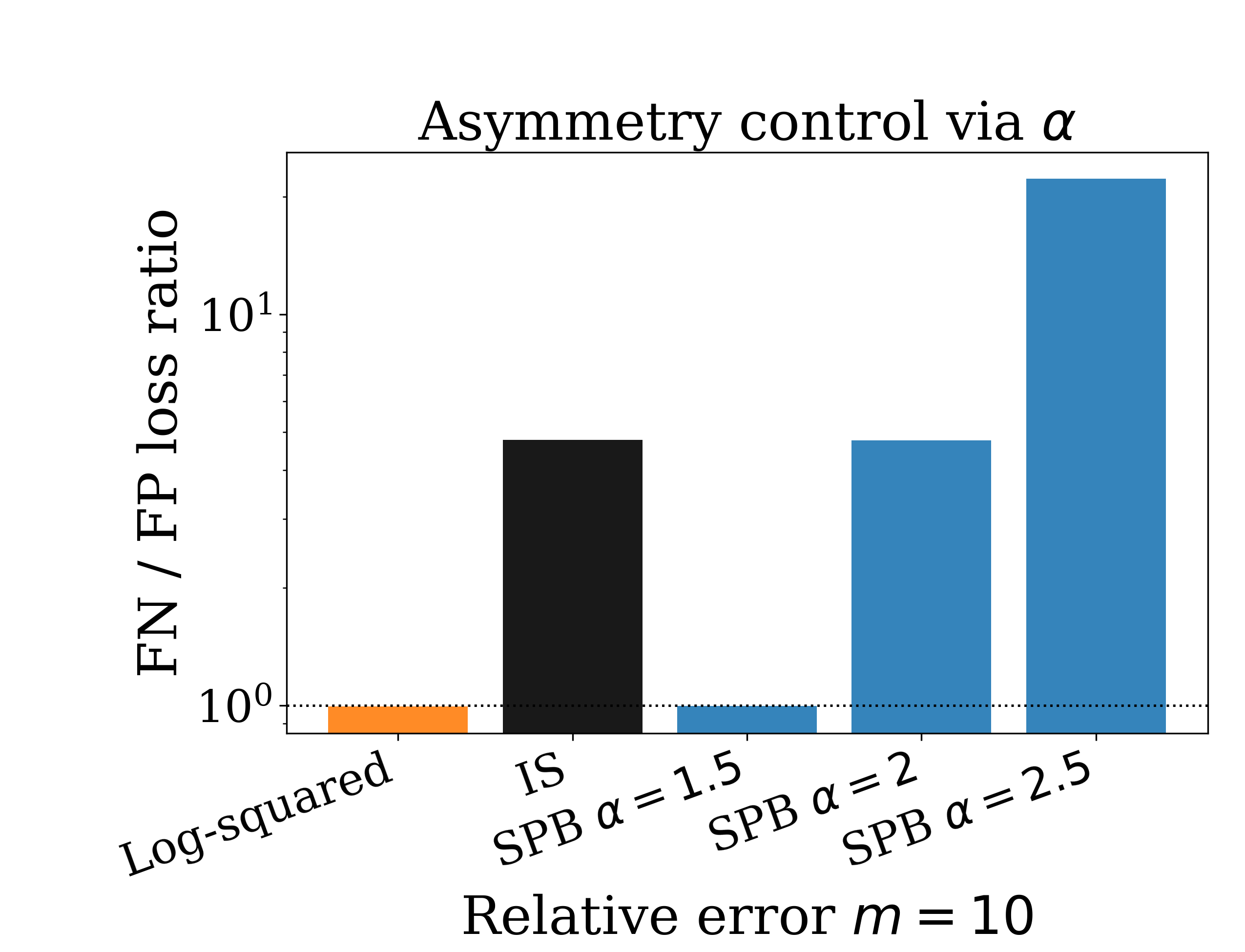}
            \caption{Comparison with standard losses.}
            \label{fig:assymetry-c}
        \end{subfigure}
    \end{tabular}
    \vspace{-0.5em}
    \caption{\textbf{Asymmetry control via $\alpha$.}
    \textbf{(a)--(b)} FN/FP loss ratios for reciprocal multiplicative errors under the Corollary-based and heuristic choices of \(\varepsilon\). Ratios above one indicate that underestimation is penalized more strongly than overestimation.
    \textbf{(c)} At \(m=10\), SPB Loss provides a tunable continuum between approximately symmetric behavior and strongly FN-averse behavior, compared with log-space squared error and IS Loss.}
    \label{fig:Assymetry}
\end{figure}

\textbf{Scale invariance and shift distortion.}
Figure~\ref{fig:ScaleInvar_distortion} examines whether SPB Loss treats the same relative error similarly across different rarity levels. In Panel~\ref{fig:scaleinvar-a}, the unshifted weighted loss is flat as a function of \(q\): for a fixed multiplicative error, the loss is essentially independent of whether the true probability is \(10^{-5}\) or \(10^{-9}\). This is the desired scale-neutral behavior.

Panels~\ref{fig:scaleinvar-b} and~\ref{fig:scaleinvar-c} then show what changes when the finite shift \(\varepsilon>0\) is introduced. The shift is needed to keep the loss finite when an estimator outputs zero, but it slightly perturbs the ideal unshifted geometry. The plots report this perturbation separately for overestimation and underestimation. For both the Corollary-based shift and the heuristic shift, the observed distortion remains below the target tolerance \(\eta=0.05\) over the displayed range. Thus, in this regime, the finite correction preserves the practical scale-neutral behavior of the loss.

\textbf{Asymmetry control via \(\alpha\).}
Figure~\ref{fig:Assymetry} studies the relative penalty for FN versus FP. Panels~\ref{fig:assymetry-a} and~\ref{fig:assymetry-b} plot the FN/FP loss ratio for reciprocal errors: a value above one means that underestimating \(q\) by a factor \(m\) is penalized more than overestimating \(q\) by the same factor. The curves show that \(\alpha\) controls this asymmetry. Values above the threshold \(\alpha=1.5\), such as \(\alpha=1.6\) and \(\alpha=2.5\), produce FN/FP ratios above one across the plotted range, consistent with Theorem~\ref{thm:asymmetry}. The \(\alpha=0.5\) curve is included as a contrasting setting outside this FN-averse regime.

Panel~\ref{fig:assymetry-c} compares this behavior with standard losses at a fixed multiplicative error \(m=10\). Log-space squared error is approximately symmetric between underestimation and overestimation, while IS Loss is FN-averse. SPB Loss interpolates between these behaviors through \(\alpha\): \(\alpha\approx1.5\) is close to symmetric, while larger \(\alpha\) increasingly penalizes underestimation. This gives SPB Loss an explicit asymmetry knob while retaining finiteness at zero estimates.

\section{Related Work}

\textbf{Low Probability Estimation in LLMs.}
Low probability estimation for language models was formalized by
\citet{wu2025estimatingprobabilitiesrareoutputs}, who introduced input-space
Importance Sampling methods and the activation-space QLD baseline used in our
experiments. We adopt their benchmark setting, but replace discrete
input-space search with gradient-guided sampling in activation space and
introduce SPB Loss to handle zero estimates and asymmetric safety costs. In a complementary line of work, \cite{jones2025forecastingrarelanguagemodel} address rare behavior estimation from a forecasting perspective rather than direct probability estimation. Their approach leverages elicitation probabilities defined as the probability that a given query produces a target behavior under repeated sampling, and uses extreme value theory to forecast how deployment-scale risks grow from smaller evaluation sets. Their method is particularly suited for anticipating when rare failures might emerge at scale, whereas our approach provides precise probability estimates for individual rare events. The two approaches are complementary: GA-AMLS can provide accurate point estimates of elicitation probabilities that could serve as inputs to the forecasting framework of \cite{jones2025forecastingrarelanguagemodel}, potentially improving the precision of deployment-scale risk predictions. 
Recent work by \cite{mcallisterdorman2026rare} develops an end-to-end framework for rare event analysis over full text completions in the discrete token space, using MCMC-based trajectory sampling from tilted completion distributions. This requires repeated full-sequence generation and mixing in sequence space, whereas GA-AMLS operates in continuous activation space and avoids generating or scoring full completions. Another complementary line of work estimates rare harmful behaviors in stochastic
language-model output distributions. \citet{angell2026estimatingtailriskslanguage}
estimate, for a fixed query, the probability that repeated sampling from a target
model produces a harmful completion. They construct unsafe proposal models using
activation steering and use importance sampling to reweight generated completions
back to the target model, yielding substantial sample savings relative to naive
Monte Carlo. This setting differs from ours in both the source of randomness and
the event definition: they estimate behavior-level probabilities over full
stochastic completions for fixed prompts, whereas we estimate single-token
argmax probabilities over random inputs under deterministic decoding.
Beyond trained transformers, \citet{wu2026estimatingexpectedoutputwide} study
sample-free mechanistic estimation for wide random MLPs with Gaussian inputs.
Their cumulant-propagation algorithms approximate activation distributions
through fixed randomly initialized networks using cumulants and Hermite
expansions, and can outperform Monte Carlo sampling, including on low-probability
threshold events. This is complementary to our approach: their guarantees and
experiments apply to analytically tractable random MLPs and closed-form input
distributions, whereas we study trained transformer language models and use
adaptive rare-event sampling in activation space.

\textbf{Activation-Space Methods.}
A growing interpretability literature supports treating language-model
activations as a structured domain for analysis and intervention. The linear
representation hypothesis \citep{park2024linearrepresentationhypothesis},
representation engineering \citep{zou2023representationtopdownapproach}, and
activation steering methods
\citep{turner2023steeringlanguagemodelsactivation, panickssery2024steeringllama2contrastive}
show that model behaviors can often be manipulated through directions in
residual-stream space. Recent work further suggests that such behavioral
structure need not be captured by a single global direction: heterogeneity-aware
steering methods model activations as clustered distributions and use optimal
transport to construct input-dependent interventions
\citep{abdullaev2026conceptheterogeneityawarerepresentationsteering}. Related work finds
low-dimensional structure for refusal behavior
\citep{arditi2024refusalmediatedsingledirection}, while sparse autoencoders
reveal richer monosemantic feature structure in activations
\citep{bricken2023towardsmonosemanticity, templeton2024scalingmonosemanticity,
gao2024scalingevaluatingsparseautoencoders}. The Tuned Lens
\citep{belrose2023elicitinglatentpredictionstuned} further motivates operating
near the pre-unembedding layer, where activations are predictive of output
tokens. GA-AMLS builds on this view of activation space, but treats it as a
probability space: instead of identifying causal directions, interventions, or
features, we estimate the probability mass of rare-event regions.

\textbf{Rare-Event Sampling.}
Our method adapts Adaptive Multi-Level Splitting (AMLS), a rare-event Monte
Carlo technique from statistical physics and particle simulation
\citep{cerou_adaptive_2019}. \citet{webb_statistical_2019} applied AMLS to
neural-network robustness verification in computer vision using a random-walk
Metropolis-Hastings kernel in input space. Directly applying this approach to
language models is difficult because text is discrete. GA-AMLS instead moves
the estimator to continuous activation space, where gradient-based MCMC kernels
such as MALA can be used efficiently.

\textbf{Adversarial Search and Defenses.}
Red-teaming and jailbreaking methods search for individual inputs that trigger
undesired behavior, using methods ranging from discrete gradient-based attacks
such as GCG \citep{zou2023universal} to activation-space attacks such as
Subspace Rerouting \citep{winninger2025subspace}. These methods are useful for
finding failures and generating adversarial training data
\citep{goodfellow2015explainingharnessingadversarialexamples,
madry2019deeplearningmodelsresistant}, including in latent adversarial training
settings \citep{casper_defending_2024, sheshadri_latent_2025}. However, finding
one failure is not the same as estimating its probability, and transfer across
search methods can be unreliable
\citep{kang2019transferadversarialrobustnessperturbation,
wei2023jailbrokendoesllmsafety}. GA-AMLS is therefore diagnostic rather than
defensive: it estimates how much probability mass lies in the rare-event region.

\section{Discussion and Limitations}
\label{sec:discussion}
While GA-AMLS provides a principled activation-space estimator for rare model behaviours, several limitations remain.

\textbf{Residual bias and activation-space approximation.}
GA-AMLS is designed as an unbiased rare-event estimator under the assumed activation prior and exact MCMC sampling. In practice, residual bias can arise from three sources. First, we approximate the true activation distribution with a fitted Student's $t$ distribution, which introduces distributional mismatch. Second, finite MCMC budgets may prevent full mixing between AMLS levels, leaving residual correlations among particles. Third, because GA-AMLS operates in continuous activation space, extreme activations found by the sampler may not correspond to any valid discrete token sequence. We partially mitigate this activation pre-image issue by fitting the prior to activations induced by samples from the input distribution $\mathcal{D}$, encouraging samples to remain near the empirical activation manifold. However, the gap between continuous activations and discrete inputs remains a fundamental limitation of activation-space methods, including QLD.

\textbf{Computational cost.}
GA-AMLS has a different computational profile from input-space importance 
sampling. It avoids full-model backward passes and operates from cached activations, but it requires many sequential activation-space scorer evaluations across AMLS levels.
GA-AMLS is computationally heavier than QLD: QLD reduces estimation to fast closed-form matrix operations, whereas GA-AMLS requires sequential Markov-chain evolution across adaptive levels.  This additional cost is the price of replacing
QLD's independence-based recombination approximation with conditional sampling under a fitted activation prior. It may therefore limit very large sweeps over many target tokens, layers, or models.

\textbf{Evaluation scope.}
For comparability with prior low-probability estimation work, our experiments follow the benchmark setting of \citet{wu2025estimatingprobabilitiesrareoutputs}: the input distribution $\mathcal{D}$ consists of independent tokens, and rare events are defined by a single target token under deterministic argmax conditions. This controlled setting enables direct comparison with ITGIS, MHIS, and QLD, but it does not capture correlated natural prompts, instruction-following contexts, or multi-step autoregressive failures. Extending activation-space rare-event estimation to realistic prompt distributions and trajectory-level events is an important direction for future work. Finally, all of our evidence comes from the $1$--$4$ layer, $d=512$ transformers of this benchmark; whether the bias--variance profile of GA-AMLS, and the adequacy of a diagonal Student-$t$ prior in whitened coordinates, persist at modern scale is untested.

\section{Conclusion}
\label{sec:conclusion}
We introduced GA-AMLS, an activation-space adaptive multilevel splitting method for estimating rare LLM behaviors that are infeasible to measure accurately with naive sampling. Empirically, GA-AMLS reduces zero-estimate failures and under symmetric penalties improves rare-probability estimation relative to baselines. We also proposed SPB Loss to evaluate rare-event estimates under asymmetric costs for underestimation and overestimation. While the method depends on the fidelity of the activation-space model, the results support activation-space rare-event simulation as a useful tool for measuring low-probability LLM behavior.

\section*{Acknowledgement}
This project is partially supported by Coefficient Giving.

\bibliography{main}
\bibliographystyle{tmlr}

\newpage
\appendix

\paragraph{Appendix outline.}
Appendix~\ref{sec:AppendixA-Derivations} develops the SPB loss theory from its
Bregman-divergence representation, derives the unshifted power family, and
analyzes the resulting false-negative/false-positive asymmetry. 
Appendix~\ref{sec:spb_guarantees} gives the shifted-power parameterization used
in the paper, proves bounded scale distortion, explains how to choose
\(\varepsilon\), \(\alpha\), and \(\gamma\), provides a closed-form computation,
connects SPB to standard losses, and empirically verifies the intended scaling
behavior. Appendix~C gives the full GA-AMLS procedure in
Algorithm~\ref{alg:ga-amls}. Appendix~D reports experimental and implementation
details, including hyperparameters, baseline implementation notes,
hyperparameter sweeps and ablations, and the activation-whitening diagnostic.
Appendix~\ref{sec:all_scatterplots} provides the full set of estimate versus 
ground-truth scatterplots across model sizes and input distributions.
Appendix~\ref{sec:loss_by_dist} breaks down SPB losses by distribution, model
size, method, and evaluation profile.

\paragraph{Code Availability.}
The code is available upon request currently and will be made public soon.

\section{SPB Loss Theory}\label{sec:AppendixA-Derivations}

\subsection{Bregman integral representation}
\label{app:bregman_integral}

Let \(q\in(0,1)\) denote the true probability and \(p\in[0,1]\) the predicted
probability. A one-dimensional Bregman divergence generated by a twice
differentiable convex potential with second derivative \(\omega(t)>0\) can be
written as
\begin{equation}
    B_\omega(q\mid p)
    =
    \int_q^p (t-q)\omega(t)\,dt.
    \label{eq:bregman_integral_oriented}
\end{equation}
This is an oriented integral. Equivalently,
\[
B_\omega(q\mid p)
=
\begin{cases}
\displaystyle
\int_q^p (t-q)\omega(t)\,dt, & p\ge q, \\[8pt]
\displaystyle
\int_p^q (q-t)\omega(t)\,dt, & p<q.
\end{cases}
\]
Thus the divergence is nonnegative, and if \(\omega(t)>0\) almost everywhere,
then \(B_\omega(q\mid p)=0\) if and only if \(p=q\), whenever the integral is
finite.

\subsection{The unshifted power family}
\label{app:unshifted_power_family}

Consider the unshifted power weight
\begin{equation}
    \omega_0(t)=t^{-\alpha},
    \qquad \alpha>0.
    \label{eq:unshifted_weight}
\end{equation}
For a multiplicative overestimate \(p=mq\), with \(m>1\), the change of
variables \(t=qu\) gives
\begin{align}
    B_0(q\mid mq)
    &=
    \int_q^{mq}(t-q)t^{-\alpha}\,dt \nonumber\\
    &=
    q^{2-\alpha}
    \int_1^m (u-1)u^{-\alpha}\,du \nonumber\\
    &=
    q^{2-\alpha}I_{FP}(\alpha,m),
    \label{eq:unshifted_fp}
\end{align}
where
\begin{equation}
    I_{FP}(\alpha,m)
    :=
    \int_1^m (u-1)u^{-\alpha}\,du.
    \label{eq:ifp_def}
\end{equation}
Similarly, for a multiplicative underestimate \(p=q/m\),
\begin{align}
    B_0(q\mid q/m)
    &=
    \int_{q/m}^{q}(q-t)t^{-\alpha}\,dt \nonumber\\
    &=
    q^{2-\alpha}
    \int_{1/m}^1 (1-u)u^{-\alpha}\,du \nonumber\\
    &=
    q^{2-\alpha}I_{FN}(\alpha,m),
    \label{eq:unshifted_fn}
\end{align}
where
\begin{equation}
    I_{FN}(\alpha,m)
    :=
    \int_{1/m}^1 (1-u)u^{-\alpha}\,du.
    \label{eq:ifn_def}
\end{equation}

Therefore both false-positive and false-negative multiplicative errors factor
into a ground-truth scale term \(q^{2-\alpha}\) and a relative-error term. The
dataset-level weight
\begin{equation}
    w(q)=q^{\alpha-2}
    \label{eq:scale_neutral_weight}
\end{equation}
cancels the \(q^{2-\alpha}\) prefactor exactly:
\[
w(q)B_0(q\mid mq)=I_{FP}(\alpha,m),
\qquad
w(q)B_0(q\mid q/m)=I_{FN}(\alpha,m).
\]
Thus the weighted unshifted power family is exactly multiplicatively
scale-invariant: for fixed relative error \(m\), the weighted loss is
independent of the absolute scale \(q\).

More generally, if one wants to add an explicit rarity premium \(\gamma\ge0\),
one can use
\[
    w_\gamma(q)=q^{\alpha-2-\gamma},
\]
in which case the weighted unshifted loss scales as \(q^{-\gamma}\) times the
relative-error term. The formal scale-invariance results below correspond to
the scale-neutral case \(\gamma=0\).

\subsection{Unshifted FN/FP asymmetry}
\label{app:unshifted_asymmetry}

Define the unshifted FN-to-FP ratio
\begin{equation}
    R_\alpha(m)
    :=
    \frac{I_{FN}(\alpha,m)}{I_{FP}(\alpha,m)}.
    \label{eq:Ralpha_def}
\end{equation}
The following lemma shows that for \(\alpha>3/2\), underestimation is penalized
more than overestimation, and the asymmetry increases with the multiplicative
error size.

\begin{lemma}
\label{lem:Rmono}
For \(\alpha>3/2\), \(R_\alpha(m)\) is strictly increasing on \((1,\infty)\).
Moreover, \(R_\alpha(m)>1\) for all \(m>1\).
\end{lemma}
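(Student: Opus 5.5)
The plan is to put both integrals on the same interval \([1,m]\), so that \(R_\alpha\) becomes a ratio of two integrals whose integrands have an explicitly monotone pointwise ratio. Then a standard ``monotone ratio of integrals'' argument gives both claims.

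\textbf{Step 1 (reflect \(I_{FN}\)).} In \(I_{FN}(\alpha,m)=\int_{1/m}^1(1-u)u^{-\alpha}\,du\), substitute \(u=1/v\), so \(du=-v^{-2}\,dv\). The range \(u\in[1/m,1]\) maps to \(v\in[1,m]\), and \((1-1/v)v^{\alpha}v^{-2}=(v-1)v^{\alpha-3}\). This gives
\[
I_{FN}(\alpha,m)=\int_1^m (v-1)v^{\alpha-3}\,dv .
\]
Writing \(f(v)=(v-1)v^{\alpha-3}\) and \(g(v)=(v-1)v^{-\alpha}\), we get \(R_\alpha(m)=F(m)/G(m)\) with \(F(m)=\int_1^m f\) and \(G(m)=\int_1^m g\). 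Both \(F\) and \(G\) are positive for \(m>1\). The key observation is that \(h(v):=f(v)/g(v)=v^{2\alpha-3}\) for \(v>1\). This \(h\) is strictly increasing exactly when \(\alpha>3/2\), and \(h(v)>1\) for \(v>1\).

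\textbf{Step 2 (\(R_\alpha>1\)).} Since \(f(v)=h(v)g(v)>g(v)\) for every \(v\in(1,m]\), and \(g>0\) there, integrating gives \(F(m)>G(m)\). Hence \(R_\alpha(m)>1\) for all \(m>1\).

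\textbf{Step 3 (strict monotonicity).} Differentiate using the fundamental theorem of calculus:
\[
R_\alpha'(m)=\frac{f(m)G(m)-g(m)F(m)}{G(m)^2}.
\]
Using \(f=hg\), the numerator equals \(g(m)\int_1^m g(v)\bigl(h(m)-h(v)\bigr)\,dv\). For \(m>1\) we have \(g(m)>0\), \(g(v)>0\) on \((1,m)\), and \(h(m)-h(v)>0\) for \(v<m\) because \(h\) is strictly increasing. Hence \(R_\alpha'(m)>0\) on \((1,\infty)\).

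As a consistency check, \(R_\alpha(m)\to h(1)=1\) as \(m\to1^+\), which agrees with Step 2. I do not expect a genuine obstacle. The only step needing care is Step 1: noticing that the reflection \(u\mapsto1/u\) aligns the two integration domains and turns the comparison into the single pointwise ratio \(v^{2\alpha-3}\). This is also where the threshold \(3/2\) appears naturally.
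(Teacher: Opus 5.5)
Your proposal is correct and follows the paper's proof essentially step for step. It uses the same reflection $u=1/v$ to put both integrals on $[1,m]$, the same pointwise ratio $v^{2\alpha-3}$ (the paper's $g$, your $h$), and the same quotient-rule derivative argument; your explicit numerator $g(m)\int_1^m g(v)\bigl(h(m)-h(v)\bigr)\,dv$ simply spells out the paper's ``a weighted average of a strictly increasing function lies strictly below its endpoint value'' step, and your Step 2 matches the paper's closing weighted-average-exceeds-$1$ observation.
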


\begin{proof}
Starting from the definition of \(I_{FN}\), apply the substitution \(u=1/v\).
Then \(du=-v^{-2}\,dv\), and as \(u\) goes from \(1/m\) to \(1\), \(v\) goes
from \(m\) to \(1\). Hence
\begin{align*}
I_{FN}(\alpha,m)
&=
\int_{1/m}^{1}(1-u)u^{-\alpha}\,du \\
&=
\int_m^1
\left(1-\frac{1}{v}\right)v^\alpha(-v^{-2})\,dv \\
&=
\int_1^m
\left(1-\frac{1}{v}\right)v^{\alpha-2}\,dv \\
&=
\int_1^m (v-1)v^{\alpha-3}\,dv.
\end{align*}
Renaming \(v\) as \(u\),
\[
I_{FN}(\alpha,m)
=
\int_1^m (u-1)u^{\alpha-3}\,du
=
\int_1^m (u-1)u^{-\alpha}u^{2\alpha-3}\,du.
\]
Let
\[
f(u):=(u-1)u^{-\alpha},
\qquad
g(u):=u^{2\alpha-3}.
\]
Then \(f(u)>0\) for \(u>1\), and because \(\alpha>3/2\), \(g(u)\) is strictly
increasing on \((1,\infty)\). Moreover,
\[
I_{FP}(\alpha,m)=\int_1^m f(u)\,du,
\qquad
I_{FN}(\alpha,m)=\int_1^m f(u)g(u)\,du.
\]
Thus
\[
R_\alpha(m)
=
\frac{\int_1^m f(u)g(u)\,du}{\int_1^m f(u)\,du},
\]
so \(R_\alpha(m)\) is the \(f\)-weighted average of the increasing function
\(g\) over \([1,m]\).

To prove strict monotonicity, define
\[
A(m):=\int_1^m f(u)g(u)\,du,
\qquad
C(m):=\int_1^m f(u)\,du.
\]
Then \(R_\alpha(m)=A(m)/C(m)\), and
\[
A'(m)=f(m)g(m),
\qquad
C'(m)=f(m).
\]
Therefore
\begin{align*}
R_\alpha'(m)
&=
\frac{A'(m)C(m)-A(m)C'(m)}{C(m)^2} \\
&=
\frac{f(m)}{C(m)}
\left[
g(m)-\frac{A(m)}{C(m)}
\right] \\
&=
\frac{f(m)}{C(m)}
\left[g(m)-R_\alpha(m)\right].
\end{align*}
Since \(g\) is strictly increasing, its weighted average over \([1,m]\) is
strictly smaller than its endpoint value \(g(m)\). Hence
\[
g(m)-R_\alpha(m)>0.
\]
Also \(f(m)>0\) and \(C(m)>0\) for \(m>1\). Therefore
\[
R_\alpha'(m)>0.
\]
So \(R_\alpha\) is strictly increasing on \((1,\infty)\).

Finally, because \(g(u)=u^{2\alpha-3}>1\) for all \(u>1\), its weighted average
is also greater than \(1\). Hence \(R_\alpha(m)>1\) for all \(m>1\).
\end{proof}

At the boundary \(\alpha=3/2\), we have \(g(u)\equiv1\), so
\(R_{3/2}(m)=1\) for all \(m>1\). Thus \(\alpha=3/2\) is the symmetric point,
while \(\alpha>3/2\) makes false negatives more costly than false positives; the mirrored argument, with \(g\) strictly decreasing, shows that \(\alpha<3/2\) makes false positives more costly.

\section{SPB Loss Guarantees and Parameterization}
\label{sec:spb_guarantees}

\subsection{The shifted power family}
\label{app:shifted_family}

The unshifted power weight \(t^{-\alpha}\) gives exact scale invariance after
dataset weighting, but it is singular at \(t=0\). In particular, for
\(\alpha\ge1\), predicting \(p=0\) can produce an infinite penalty. To make the
loss finite at zero, define the shifted power weight
\begin{equation}
    \omega_\varepsilon(t)
    :=
    (t+\varepsilon)^{-\alpha},
    \qquad
    \alpha>0,
    \quad
    \varepsilon>0.
\end{equation}
The corresponding shifted divergence is
\begin{equation}
    B_\varepsilon(q\mid p)
    =
    \int_q^p (t-q)(t+\varepsilon)^{-\alpha}\,dt.
    \label{eq:shifted_bregman}
\end{equation}
Because \((t+\varepsilon)^{-\alpha}\le \varepsilon^{-\alpha}<\infty\) on
\([0,1]\), the shifted divergence is finite even when \(p=0\).

The price of this shift is that exact multiplicative scale invariance is lost:
after the change of variables \(t=qu\), the ratio
\[
    x:=\frac{\varepsilon}{q}
\]
remains inside the integral. The following results show that the desired
properties are preserved on a finite evaluation box
\[
    q\in[q_{\min},q_{\max}],
    \qquad
    m\in(1,m_{\max}],
\]
provided \(\varepsilon\) is sufficiently small relative to \(q_{\min}\).

\subsection{Bounded scale distortion}
\label{app:proof_scale}

\begin{theorem}[Bounded Scale Distortion]
\label{thm:scale}
Fix \(\alpha>0\), \(\eta\in(0,1)\), and \(m_{\max}>1\). Let
\(w(q)=q^{\alpha-2}\). If
\begin{equation}
    \varepsilon
    \le
    \frac{q_{\min}}{m_{\max}}
    \left((1-\eta)^{-1/\alpha}-1\right),
    \label{eq:eps_scale_thm}
\end{equation}
then, uniformly over \(q\in[q_{\min},q_{\max}]\) and
\(m\in(1,m_{\max}]\),
\begin{align}
0
\le
\frac{
I_{FP}(\alpha,m)-w(q)B_\varepsilon(q\mid mq)
}{
I_{FP}(\alpha,m)
}
&\le
\eta,
\label{eq:fp_scale_distortion_bound}
\\[6pt]
0
\le
\frac{
I_{FN}(\alpha,m)-w(q)B_\varepsilon(q\mid q/m)
}{
I_{FN}(\alpha,m)
}
&\le
\eta.
\label{eq:fn_scale_distortion_bound}
\end{align}
\end{theorem}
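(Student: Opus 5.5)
The plan is to rescale the shifted divergence exactly as in the unshifted computation, and then compare the two integrands pointwise. Substituting \(t=qu\) in \eqref{eq:shifted_bregman} and writing \(x=\varepsilon/q\) gives
\[
w(q)B_\varepsilon(q\mid mq)=\int_1^m (u-1)(u+x)^{-\alpha}\,du,
\qquad
w(q)B_\varepsilon(q\mid q/m)=\int_{1/m}^1 (1-u)(u+x)^{-\alpha}\,du .
\]
These are the same as \(I_{FP}(\alpha,m)\) and \(I_{FN}(\alpha,m)\) from \eqref{eq:ifp_def} and \eqref{eq:ifn_def}, except that the weight \(u^{-\alpha}\) is replaced by \((u+x)^{-\alpha}\). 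The factor \(q^{2-\alpha}\) cancels exactly against \(w(q)=q^{\alpha-2}\).

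Next I write the shifted weight as a multiple of the unshifted one:
\[
(u+x)^{-\alpha}=u^{-\alpha}\,\rho(u), \qquad \rho(u):=(1+x/u)^{-\alpha}.
\]
Since \(x>0\) and \(\alpha>0\), we have \(\rho(u)\in(0,1)\). The integrands \((u-1)\) and \((1-u)\) are nonnegative on the respective ranges, so each weighted shifted integral is at most its unshifted counterpart. This already gives the lower bounds \(0\le\cdot\) in \eqref{eq:fp_scale_distortion_bound} and \eqref{eq:fn_scale_distortion_bound}.

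For the upper bounds it suffices to show \(\rho(u)\ge 1-\eta\) on the whole integration range. Then the shifted integral is at least \((1-\eta)I\), so the relative distortion is at most \(\eta\). The function \(\rho\) is increasing in \(u\), so its worst case is at the smallest \(u\) in the range. That is \(u=1\) for the FP integral and \(u=1/m\) for the FN integral. The binding case is the FN one: we need
\[
(1+xm)^{-\alpha}\ge 1-\eta \quad\Longleftrightarrow\quad xm\le (1-\eta)^{-1/\alpha}-1 .
\]
Now use \(x=\varepsilon/q\le \varepsilon/q_{\min}\) and \(m\le m_{\max}\). These give \(xm\le \varepsilon m_{\max}/q_{\min}\), and by \eqref{eq:eps_scale_thm} this is at most \((1-\eta)^{-1/\alpha}-1\). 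The FP condition \(x\le(1-\eta)^{-1/\alpha}-1\) is weaker, since \(m>1\), so it follows automatically. The bounds are therefore uniform over the box \(q\in[q_{\min},q_{\max}]\), \(m\in(1,m_{\max}]\).

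I expect no real obstacle. The only delicate point is recognising that the FN case, whose integral reaches down to \(u=1/m\), forces the factor \(1/m_{\max}\) in \eqref{eq:eps_scale_thm}. A second point to check is that \(I_{FP}\) and \(I_{FN}\) are strictly positive for \(m>1\), so the ratios are well defined. Finiteness at the FN endpoint, which covers \(\alpha\ge1\) as \(m\to\infty\), is not needed, because \(m\le m_{\max}\) is finite.
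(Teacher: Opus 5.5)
Your proposal is correct and follows essentially the same argument as the paper: after the substitution \(t=qu\) with \(x=\varepsilon/q\), the paper sandwiches \((u+x)^{-\alpha}\) between \(u^{-\alpha}(1+x)^{-\alpha}\) (resp.\ \(u^{-\alpha}(1+mx)^{-\alpha}\)) and \(u^{-\alpha}\). This is exactly your observation that \(\rho(u)=(1+x/u)^{-\alpha}\) is smallest at the left endpoint of the integration range, and the paper then likewise uses \(x\le mx\le m_{\max}\varepsilon/q_{\min}\), so the FN case is the binding one.
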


\begin{proof}
Fix \(q\in[q_{\min},q_{\max}]\) and \(m\in(1,m_{\max}]\), and define
\[
    x:=\frac{\varepsilon}{q}.
\]

First consider the overestimate \(p=mq\). Using \(t=qu\),
\begin{align}
B_\varepsilon(q\mid mq)
&=
\int_q^{mq}(t-q)(t+\varepsilon)^{-\alpha}\,dt \nonumber\\
&=
q^{2-\alpha}
\int_1^m (u-1)(u+x)^{-\alpha}\,du \nonumber\\
&=
q^{2-\alpha}I_{FP,\varepsilon}(\alpha,m;x),
\end{align}
where
\[
    I_{FP,\varepsilon}(\alpha,m;x)
    :=
    \int_1^m (u-1)(u+x)^{-\alpha}\,du.
\]
For \(u\in[1,m]\),
\[
    u\le u+x\le u(1+x).
\]
Since \(\alpha>0\), raising to the power \(-\alpha\) reverses the inequalities:
\[
    u^{-\alpha}
    \ge
    (u+x)^{-\alpha}
    \ge
    u^{-\alpha}(1+x)^{-\alpha}.
\]
Multiplying by \((u-1)\ge0\) and integrating gives
\begin{equation}
    (1+x)^{-\alpha}I_{FP}(\alpha,m)
    \le
    I_{FP,\varepsilon}(\alpha,m;x)
    \le
    I_{FP}(\alpha,m).
    \label{eq:fp_shift_bounds}
\end{equation}
Therefore
\begin{equation}
0
\le
\frac{
I_{FP}(\alpha,m)-I_{FP,\varepsilon}(\alpha,m;x)
}{
I_{FP}(\alpha,m)
}
\le
1-(1+x)^{-\alpha}.
\label{eq:fp_relative_distortion}
\end{equation}

Now consider the underestimate \(p=q/m\). Again using \(t=qu\),
\begin{align}
B_\varepsilon(q\mid q/m)
&=
\int_{q/m}^{q}(q-t)(t+\varepsilon)^{-\alpha}\,dt \nonumber\\
&=
q^{2-\alpha}
\int_{1/m}^1 (1-u)(u+x)^{-\alpha}\,du \nonumber\\
&=
q^{2-\alpha}I_{FN,\varepsilon}(\alpha,m;x),
\end{align}
where
\[
    I_{FN,\varepsilon}(\alpha,m;x)
    :=
    \int_{1/m}^1 (1-u)(u+x)^{-\alpha}\,du.
\]
For \(u\in[1/m,1]\), we have \(1/u\le m\). Hence
\[
    u+x
    =
    u\left(1+\frac{x}{u}\right)
    \le
    u(1+mx),
\]
and also \(u+x\ge u\). Therefore
\[
    u\le u+x\le u(1+mx).
\]
Raising to the power \(-\alpha\) gives
\[
    u^{-\alpha}
    \ge
    (u+x)^{-\alpha}
    \ge
    u^{-\alpha}(1+mx)^{-\alpha}.
\]
Multiplying by \((1-u)\ge0\) and integrating gives
\begin{equation}
    (1+mx)^{-\alpha}I_{FN}(\alpha,m)
    \le
    I_{FN,\varepsilon}(\alpha,m;x)
    \le
    I_{FN}(\alpha,m).
    \label{eq:fn_shift_bounds}
\end{equation}
Therefore
\begin{equation}
0
\le
\frac{
I_{FN}(\alpha,m)-I_{FN,\varepsilon}(\alpha,m;x)
}{
I_{FN}(\alpha,m)
}
\le
1-(1+mx)^{-\alpha}.
\label{eq:fn_relative_distortion}
\end{equation}

Now assume \eqref{eq:eps_scale_thm}. Since \(q\ge q_{\min}\) and
\(m\le m_{\max}\),
\[
    mx
    =
    \frac{m\varepsilon}{q}
    \le
    \frac{m_{\max}\varepsilon}{q_{\min}}
    \le
    (1-\eta)^{-1/\alpha}-1.
\]
Thus
\[
    1+mx
    \le
    (1-\eta)^{-1/\alpha}.
\]
Equivalently,
\[
    (1+mx)^{-\alpha}\ge 1-\eta,
\]
so
\[
    1-(1+mx)^{-\alpha}\le\eta.
\]
This proves the FN distortion bound. Since \(x\le mx\), the same condition also
implies
\[
    1-(1+x)^{-\alpha}\le\eta,
\]
and therefore proves the FP distortion bound.

Finally, because \(w(q)=q^{\alpha-2}\),
\[
    w(q)B_\varepsilon(q\mid mq)
    =
    I_{FP,\varepsilon}(\alpha,m;x),
\]
and
\[
    w(q)B_\varepsilon(q\mid q/m)
    =
    I_{FN,\varepsilon}(\alpha,m;x).
\]
Substituting these identities into the relative distortion bounds proves the
claim.
\end{proof}

\subsection{Asymmetric sensitivity}
\label{app:proof_asymmetry}

\begin{theorem}[Asymmetric Sensitivity]
\label{thm:asymmetry}
Fix \(\alpha>3/2\) and \(1<m_0\le m_{\max}\). Let
\[
    R_\alpha(m)
    :=
    \frac{I_{FN}(\alpha,m)}{I_{FP}(\alpha,m)}.
\]
If
\begin{equation}
    \varepsilon
    <
    \frac{q_{\min}}{m_{\max}}
    \left[
        R_\alpha(m_0)^{1/\alpha}-1
    \right],
    \label{eq:eps_asym_thm}
\end{equation}
then
\begin{equation}
    B_\varepsilon(q\mid q/m)
    >
    B_\varepsilon(q\mid mq)
\end{equation}
for all \(q\in[q_{\min},q_{\max}]\) and all
\(m\in[m_0,m_{\max}]\).
\end{theorem}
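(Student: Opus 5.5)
The plan is to reduce the shifted comparison to the unshifted ratio $R_\alpha$ using the sandwich bounds already established in the proof of Theorem~\ref{thm:scale}, and then invoke Lemma~\ref{lem:Rmono}.

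Fix $q\in[q_{\min},q_{\max}]$ and $m\in[m_0,m_{\max}]$, and set $x=\varepsilon/q$. The substitution $t=qu$ gives $B_\varepsilon(q\mid q/m)=q^{2-\alpha}I_{FN,\varepsilon}(\alpha,m;x)$ and $B_\varepsilon(q\mid mq)=q^{2-\alpha}I_{FP,\varepsilon}(\alpha,m;x)$. Since $q^{2-\alpha}>0$, it suffices to show $I_{FN,\varepsilon}>I_{FP,\varepsilon}$.

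From \eqref{eq:fn_shift_bounds} we have the lower bound $I_{FN,\varepsilon}\ge(1+mx)^{-\alpha}I_{FN}(\alpha,m)$. From \eqref{eq:fp_shift_bounds} we have the upper bound $I_{FP,\varepsilon}\le I_{FP}(\alpha,m)$. Combining these,
\[
\frac{I_{FN,\varepsilon}}{I_{FP,\varepsilon}}\ \ge\ (1+mx)^{-\alpha}R_\alpha(m).
\]
This is valid because $I_{FP}>0$ for $m>1$.

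Next I control each factor uniformly over the box.
\begin{itemize}
\item By Lemma~\ref{lem:Rmono}, since $\alpha>3/2$, $R_\alpha$ is strictly increasing. Hence $R_\alpha(m)\ge R_\alpha(m_0)>1$ for every $m\ge m_0$.
\item Since $m\le m_{\max}$ and $q\ge q_{\min}$, we have $mx\le m_{\max}\varepsilon/q_{\min}$. The strict hypothesis \eqref{eq:eps_asym_thm} then gives $1+mx<R_\alpha(m_0)^{1/\alpha}$. The right-hand side exceeds $1$, so the bracket in \eqref{eq:eps_asym_thm} is positive and a valid $\varepsilon>0$ exists. Raising to the power $\alpha>0$ yields $(1+mx)^{\alpha}<R_\alpha(m_0)$.
\end{itemize}
Therefore the ratio satisfies $(1+mx)^{-\alpha}R_\alpha(m)>R_\alpha(m_0)^{-1}R_\alpha(m_0)=1$, which is the claim.

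There is no serious obstacle here. The one point that needs care is the monotonicity step: the argument must use $R_\alpha(m)\ge R_\alpha(m_0)$, not merely $R_\alpha(m)>1$, because the $\varepsilon$ condition is stated in terms of $m_0$. The inequality must also be kept strict, and that strictness comes from the strict hypothesis on $\varepsilon$. It is worth checking that the worst case really sits at $m=m_{\max}$ in the distortion factor but at $m=m_0$ in the ratio. The proof handles both simultaneously because it bounds the two factors separately.
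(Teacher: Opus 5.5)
Your proof is correct and follows the paper's argument essentially step for step. Both use the substitution $x=\varepsilon/q$ and the bounds $I_{FP,\varepsilon}\le I_{FP}$ and $I_{FN,\varepsilon}\ge(1+mx)^{-\alpha}I_{FN}$, then combine Lemma~\ref{lem:Rmono} ($R_\alpha(m)\ge R_\alpha(m_0)$) with $mx\le m_{\max}\varepsilon/q_{\min}$. The only differences are cosmetic: you phrase the comparison as a ratio bound, and you reuse the sandwich bounds from the proof of Theorem~\ref{thm:scale} (legitimately, since they hold for any $x\ge0$ regardless of that theorem's $\varepsilon$ condition) instead of re-deriving them.
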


\begin{proof}
Fix \(q\in[q_{\min},q_{\max}]\) and \(m\in[m_0,m_{\max}]\), and define
\[
    x:=\frac{\varepsilon}{q}.
\]

For the overestimate \(p=mq\),
\[
B_\varepsilon(q\mid mq)
=
q^{2-\alpha}I_{FP,\varepsilon}(\alpha,m;x),
\]
where
\[
I_{FP,\varepsilon}(\alpha,m;x)
=
\int_1^m (u-1)(u+x)^{-\alpha}\,du.
\]
Since \(x\ge0\) and \(\alpha>0\),
\[
    (u+x)^{-\alpha}\le u^{-\alpha}.
\]
Therefore
\begin{equation}
    I_{FP,\varepsilon}(\alpha,m;x)
    \le
    I_{FP}(\alpha,m).
    \label{eq:fp_upper_for_asym}
\end{equation}

For the underestimate \(p=q/m\),
\[
B_\varepsilon(q\mid q/m)
=
q^{2-\alpha}I_{FN,\varepsilon}(\alpha,m;x),
\]
where
\[
I_{FN,\varepsilon}(\alpha,m;x)
=
\int_{1/m}^1 (1-u)(u+x)^{-\alpha}\,du.
\]
For \(u\in[1/m,1]\), we have
\[
    u+x
    =
    u\left(1+\frac{x}{u}\right)
    \le
    u(1+mx).
\]
Hence
\[
    (u+x)^{-\alpha}
    \ge
    u^{-\alpha}(1+mx)^{-\alpha}.
\]
Multiplying by \((1-u)\ge0\) and integrating gives
\begin{equation}
    I_{FN,\varepsilon}(\alpha,m;x)
    \ge
    (1+mx)^{-\alpha}I_{FN}(\alpha,m).
    \label{eq:fn_lower_for_asym}
\end{equation}

It is therefore enough to ensure
\[
    (1+mx)^{-\alpha}I_{FN}(\alpha,m)
    >
    I_{FP}(\alpha,m).
\]
Equivalently, using
\[
    R_\alpha(m)
    =
    \frac{I_{FN}(\alpha,m)}{I_{FP}(\alpha,m)},
\]
it is enough to have
\[
    (1+mx)^\alpha<R_\alpha(m),
\]
or
\[
    1+mx<R_\alpha(m)^{1/\alpha}.
\]

By Lemma~\ref{lem:Rmono}, \(R_\alpha(m)\) is strictly increasing in \(m\) for
\(\alpha>3/2\). Since \(m\ge m_0\),
\[
    R_\alpha(m)\ge R_\alpha(m_0).
\]
Also, since \(m\le m_{\max}\) and \(q\ge q_{\min}\),
\[
    mx
    =
    \frac{m\varepsilon}{q}
    \le
    \frac{m_{\max}\varepsilon}{q_{\min}}.
\]
The assumed condition \eqref{eq:eps_asym_thm} implies
\[
    1+\frac{m_{\max}\varepsilon}{q_{\min}}
    <
    R_\alpha(m_0)^{1/\alpha}.
\]
Therefore
\[
    1+mx
    <
    R_\alpha(m_0)^{1/\alpha}
    \le
    R_\alpha(m)^{1/\alpha}.
\]
Hence
\[
    (1+mx)^{-\alpha}I_{FN}(\alpha,m)
    >
    I_{FP}(\alpha,m).
\]
Combining this strict inequality with
\eqref{eq:fp_upper_for_asym} and \eqref{eq:fn_lower_for_asym}, we obtain
\[
    I_{FN,\varepsilon}(\alpha,m;x)
    >
    I_{FP,\varepsilon}(\alpha,m;x).
\]
Multiplying by the positive factor \(q^{2-\alpha}\) gives
\[
    B_\varepsilon(q\mid q/m)
    >
    B_\varepsilon(q\mid mq).
\]
This proves the theorem.
\end{proof}

\subsection{Simultaneous finite-error guarantee}
\label{app:simultaneous_guarantee}

\begin{corollary}[Simultaneous finite-error guarantee]
\label{cor:simultaneous}
Fix \(\alpha>3/2\), \(\eta\in(0,1)\), and
\[
    1<m_0\le m\le m_{\max}.
\]
Let
\[
    R_\alpha(m)
    :=
    \frac{I_{FN}(\alpha,m)}{I_{FP}(\alpha,m)}.
\]
If
\begin{equation}
    0\le\varepsilon
    <
    \min\left\{
    \frac{q_{\min}}{m_{\max}}
    \left((1-\eta)^{-1/\alpha}-1\right),
    \;
    \frac{q_{\min}}{m_{\max}}
    \left[
        R_\alpha(m_0)^{1/\alpha}-1
    \right]
    \right\},
    \label{eq:eps_simultaneous}
\end{equation}
then the shifted weighted loss simultaneously satisfies:

\begin{enumerate}
    \item approximate scale invariance within relative tolerance \(\eta\):
    \[
    0
    \le
    \frac{
    I_{FP}(\alpha,m)-w(q)B_\varepsilon(q\mid mq)
    }{
    I_{FP}(\alpha,m)
    }
    \le
    \eta,
    \]
    and
    \[
    0
    \le
    \frac{
    I_{FN}(\alpha,m)-w(q)B_\varepsilon(q\mid q/m)
    }{
    I_{FN}(\alpha,m)
    }
    \le
    \eta;
    \]

    \item finite-error asymmetric sensitivity:
    \[
        B_\varepsilon(q\mid q/m)>B_\varepsilon(q\mid mq).
    \]
\end{enumerate}
Both guarantees hold uniformly for
\(q\in[q_{\min},q_{\max}]\) and \(m\in[m_0,m_{\max}]\).
\end{corollary}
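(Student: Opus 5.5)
The corollary follows by combining Theorem~\ref{thm:scale} and Theorem~\ref{thm:asymmetry}. The plan is to check that the single condition \eqref{eq:eps_simultaneous} implies each theorem's hypothesis, and then read off the two conclusions on the common range \(q\in[q_{\min},q_{\max}]\), \(m\in[m_0,m_{\max}]\). No new estimates are needed.

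\emph{Part 1.} Since \(\varepsilon\) is strictly below the minimum of the two quantities in \eqref{eq:eps_simultaneous}, it is in particular strictly below the first one. Hence the non-strict requirement \eqref{eq:eps_scale_thm} holds. Theorem~\ref{thm:scale} then gives both relative distortion bounds uniformly over \(m\in(1,m_{\max}]\), and therefore on the subrange \([m_0,m_{\max}]\). Here \(w(q)=q^{\alpha-2}\), i.e.\ \(\gamma=0\), as in that theorem.

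\emph{Part 2.} Likewise, \(\varepsilon\) is strictly below the second quantity, which is exactly the strict inequality \eqref{eq:eps_asym_thm}. Since \(\alpha>3/2\) and \(1<m_0\le m_{\max}\) are assumed, Theorem~\ref{thm:asymmetry} applies and gives \(B_\varepsilon(q\mid q/m)>B_\varepsilon(q\mid mq)\) on the stated box.

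The only step needing care is the endpoint \(\varepsilon=0\), which the corollary allows but the earlier theorems (stated for \(\varepsilon>0\)) may not formally cover. For this case I would argue directly:
\begin{itemize}
\item When \(\varepsilon=0\), the weighted loss equals \(I_{FP}\) or \(I_{FN}\) exactly by the unshifted computations, so the distortion is \(0\le\eta\).
\item Asymmetry reduces to \(R_\alpha(m)>1\), which is Lemma~\ref{lem:Rmono}.
\item The proofs of both theorems also go through verbatim with \(x=0\), since they only use \(x\ge0\).
\end{itemize}
The minimum is well defined and positive because \((1-\eta)^{-1/\alpha}>1\) and \(R_\alpha(m_0)>1\) by Lemma~\ref{lem:Rmono}. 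So the admissible range of \(\varepsilon\) is nonempty, and the corollary is not vacuous.
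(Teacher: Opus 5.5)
Your proposal is correct and follows the paper's proof, which likewise observes that the two terms of the minimum are exactly the sufficient conditions of Theorem~\ref{thm:scale} and Theorem~\ref{thm:asymmetry}, so both conclusions hold on the common range. Your separate handling of the endpoint $\varepsilon=0$ (which the shifted family's definition formally excludes) and your check via Lemma~\ref{lem:Rmono} that the minimum is strictly positive are both sound; the paper leaves these points implicit.
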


\begin{proof}
The first term in the minimum in \eqref{eq:eps_simultaneous} is exactly the
sufficient condition of Theorem~\ref{thm:scale}. The second term is exactly the
sufficient condition of Theorem~\ref{thm:asymmetry}. Therefore both conclusions
hold simultaneously.
\end{proof}

\subsection{Interpretation of the shift bound}\label{sec:choose_eps}
\label{app:shift_bound_interpretation}

Setting \(\varepsilon=0\) gives exact multiplicative scale invariance and, for
\(\alpha>3/2\), strict FN-over-FP asymmetry. However, the unshifted loss can be
infinite when a model predicts \(p=0\). Introducing \(\varepsilon>0\) removes
this singularity, but it also introduces an additive floor into the local
weight \((t+\varepsilon)^{-\alpha}\). If \(\varepsilon\) is too large relative
to \(q_{\min}\), this floor overwhelms the relevant probability scale and
destroys both approximate scale invariance and asymmetric sensitivity.

The bounds above quantify the permissible size of \(\varepsilon\). They are
sufficient rather than necessary, and are intentionally conservative. A simple
practical choice is to set
\[
    \varepsilon
    =
    c\frac{q_{\min}}{m_{\max}},
    \qquad
    c\ll1,
\]
and then verify that this value lies below the theorem-specific upper bounds.

\subsection{Parameter Guidance}
\label{sec:param_guidance_table}

Table~\ref{tab:alpha_gamma_use_cases} summarizes suggested
\((\alpha,\gamma)\) settings for the evaluation use cases referenced in
Section~\ref{sec:param_guidance}. The parameter \(\gamma\) controls any
additional dataset-level rarity premium, while \(\alpha\) controls the
asymmetry between reciprocal overestimation and underestimation. We now describe
a more fine-grained way to choose \(\alpha\) by specifying a desired worst-case
FN/FP ratio.

\subsubsection{Choosing \texorpdfstring{\(\alpha\)}{alpha} via a worst-case FN/FP ratio}
\label{sec:choose_alpha}

A natural way to calibrate the asymmetry parameter \(\alpha\) is to specify a
desired lower bound \(r_\star>1\) on the cost of reciprocal underestimation
relative to reciprocal overestimation. In the unshifted geometry,
\[
    \frac{B_0(q\mid q/m)}{B_0(q\mid mq)}
    =
    \frac{I_{FN}(\alpha,m)}{I_{FP}(\alpha,m)}
    =
    R_\alpha(m),
\]
so this ratio is independent of the base probability \(q\). The same is true
after multiplying both losses by any dataset-level weight depending only on
\(q\), since the weight cancels in the ratio.

Such a guarantee cannot hold uniformly as \(m\downarrow1\). Indeed, for every
fixed \(\alpha\),
\[
    \lim_{m\downarrow1} R_\alpha(m)=1.
\]
Thus no finite \(\alpha\) can guarantee \(R_\alpha(m)\ge r_\star>1\) for
arbitrarily small multiplicative errors. A meaningful worst-case calibration
must therefore specify a minimum multiplicative error \(m_0>1\).

Fix \(m_0>1\) and \(m_{\max}\ge m_0\). We require
\[
    R_\alpha(m)\ge r_\star
    \qquad
    \forall m\in[m_0,m_{\max}].
\]
For \(r_\star>1\), any feasible choice must have \(\alpha>3/2\). By
Lemma~\ref{lem:Rmono}, \(R_\alpha(m)\) is then increasing in \(m\), so the
worst case occurs at \(m=m_0\). Therefore, in the unshifted geometry, the
uniform calibration condition over \(m\in[m_0,m_{\max}]\) is equivalent to
\[
    R_\alpha(m_0)\ge r_\star.
    \label{eq:alpha_calibration_condition}
\]

Using the identity \(I_{FN}(\alpha,m)=I_{FP}(3-\alpha,m)\), we can write
\[
    R_\alpha(m)
    =
    \frac{I_{FP}(3-\alpha,m)}{I_{FP}(\alpha,m)}.
\]
Here
\[
I_{FP}(\beta,m)
=
\int_1^m (u-1)u^{-\beta}\,du
=
\begin{cases}
\displaystyle
\frac{m^{2-\beta}-1}{2-\beta}
-
\frac{m^{1-\beta}-1}{1-\beta},
& \beta\notin\{1,2\},\\[1.25em]
m-1-\log m,
& \beta=1,\\[0.5em]
\log m+\frac1m-1,
& \beta=2.
\end{cases}
\]

For fixed \(m_0>1\), the map \(\alpha\mapsto R_\alpha(m_0)\) is strictly
increasing. Indeed, defining
\[
    F(\beta,m)=I_{FP}(\beta,m),
\]
we have
\[
    \partial_\beta F(\beta,m)
    =
    -\int_1^m (u-1)u^{-\beta}\log u\,du <0.
\]
Thus \(F(\beta,m)\) is strictly decreasing in \(\beta\), and
\[
    R_\alpha(m_0)
    =
    \frac{F(3-\alpha,m_0)}{F(\alpha,m_0)}
\]
is strictly increasing in \(\alpha\). Moreover,
\[
    R_{3/2}(m_0)=1,
    \qquad
    \lim_{\alpha\to\infty}R_\alpha(m_0)=\infty.
\]
Hence for every \(r_\star>1\) there is a unique threshold
\[
    \alpha_\star=\alpha_\star(m_0,r_\star)>3/2
\]
satisfying
\[
    R_{\alpha_\star}(m_0)=r_\star.
\]
Choosing \(\alpha\ge\alpha_\star\) guarantees
\[
    \min_{m\in[m_0,m_{\max}]}R_\alpha(m)\ge r_\star
\]
for the unshifted loss.

\paragraph{Computing \(\alpha_\star\).}
The threshold \(\alpha_\star\) generally has no closed-form inverse, but it can
be computed by bisection because \(\alpha\mapsto R_\alpha(m_0)\) is continuous
and strictly increasing. A practical procedure is:

\begin{enumerate}
    \item Choose the minimum relevant multiplicative error \(m_0>1\), the
    desired worst-case ratio \(r_\star>1\), and a numerical tolerance
    \(\tau>0\).

    \item Set
    $\alpha_{\mathrm{lo}}=\frac32.$
    Then
   $R_{\alpha_{\mathrm{lo}}}(m_0)=1<r_\star.$
    \item Choose an initial upper bracket
    \(\alpha_{\mathrm{hi}}>3/2\). Increase it, for example by repeatedly
    doubling,$\alpha_{\mathrm{hi}}\leftarrow 2\alpha_{\mathrm{hi}},$
    until $R_{\alpha_{\mathrm{hi}}}(m_0)\ge r_\star.$
    \item While $
        \alpha_{\mathrm{hi}}-\alpha_{\mathrm{lo}}>\tau,$
    set $ \alpha_{\mathrm{mid}}
        =
        \frac{\alpha_{\mathrm{lo}}+\alpha_{\mathrm{hi}}}{2}.$
    If $
        R_{\alpha_{\mathrm{mid}}}(m_0)<r_\star,$
    set $
        \alpha_{\mathrm{lo}}\leftarrow\alpha_{\mathrm{mid}}.$
    Otherwise set $
        \alpha_{\mathrm{hi}}\leftarrow\alpha_{\mathrm{mid}}.$

    \item Return \(\alpha_{\mathrm{hi}}\). This gives a conservative numerical
    choice satisfying $
        R_{\alpha_{\mathrm{hi}}}(m_0)\ge r_\star.$
\end{enumerate}

\paragraph{Effect of the shift.}
For the shifted loss, the exact unshifted ratio no longer holds. However, the
proof of Theorem~\ref{thm:asymmetry} gives the conservative lower bound
\[
    \frac{B_\varepsilon(q\mid q/m)}
         {B_\varepsilon(q\mid mq)}
    \ge
    (1+m\varepsilon/q)^{-\alpha}R_\alpha(m).
\]
Therefore, uniformly over \(q\in[q_{\min},q_{\max}]\) and
\(m\in[m_0,m_{\max}]\),
\[
    \frac{B_\varepsilon(q\mid q/m)}
         {B_\varepsilon(q\mid mq)}
    \ge
    \left(1+\frac{m_{\max}\varepsilon}{q_{\min}}\right)^{-\alpha}
    R_\alpha(m_0).
\]
Thus a sufficient condition for the shifted loss to achieve worst-case ratio at
least \(r_\star\) is
\[
    \left(1+\frac{m_{\max}\varepsilon}{q_{\min}}\right)^{-\alpha}
    R_\alpha(m_0)
    \ge r_\star.
    \label{eq:shifted_alpha_condition}
\]
Equivalently, after choosing \(\alpha\), it is sufficient to choose
\[
    \varepsilon
    \le
    \frac{q_{\min}}{m_{\max}}
    \left[
        \left(\frac{R_\alpha(m_0)}{r_\star}\right)^{1/\alpha}
        -1
    \right].
    \label{eq:eps_ratio_rstar}
\]
If a strict ratio \(>r_\star\) is desired, use a strict inequality. This bound
is meaningful for positive \(\varepsilon\) only when
\[
    R_\alpha(m_0)>r_\star.
\]
Consequently, if one intends to use a positive shift \(\varepsilon>0\), one
should choose \(\alpha\) with some margin above the unshifted threshold
\(\alpha_\star\), rather than choosing exactly \(\alpha=\alpha_\star\).
Theorem~\ref{thm:asymmetry} corresponds to the special case \(r_\star=1\).

\begin{table}[!htbp]
\centering
\small
\begin{tabular}{p{0.34\linewidth} p{0.44\linewidth} c c}
\toprule
\textbf{Use case} & \textbf{What you want the metric to emphasize} & $\boldsymbol{\alpha}$ & $\boldsymbol{\gamma}$ \\
\midrule
Unbiased method benchmarking
& Scale-neutral comparison across a wide ground truth range where FN and FP treated approximately symmetrically.
& $1.5$ & $0$ \\
\midrule
Product safety oversight
& Mild FN aversion (similar to Itakura-Saito Loss) but avoid excessive false alarms; performance should still matter across the full ground truth range.
& $1.7$--$2.2$ & $0$ \\
\midrule
Deep-tail stress testing (tail-focused but not fully worst-case)
& Put substantially more weight on the smallest $q$ examples (rarest events), while still comparing methods on estimation quality (not just conservatism).
& $1.5$--$2.5$ & $0.5$--$1$ \\
\midrule
Catastrophic-risk audit 
& Extreme-tail priority and willing to tolerate many false alarms if it reduces underestimation in the deepest tail.
& $3$--$6$ & $1$--$2$ \\
\bottomrule
\end{tabular}
\caption{Suggested shifted-power metric parameters by evaluation use case. Here $\alpha$ controls FN-vs-FP asymmetry (higher $\alpha$ penalizes underestimation more), and $\gamma$ controls dataset-level rarity premium (higher $\gamma$ upweights smaller ground-truth probabilities $q$).}
\label{tab:alpha_gamma_use_cases}
\end{table}

\subsection{Computation: closed form }\label{sec:computation}

\textbf{Closed-form $B_\varepsilon(q\mid p)$.}
For $\alpha\neq 1,2$, the shifted power family admits the closed form
\begin{equation}
B_\varepsilon(q\mid p)=
\frac{(p+\varepsilon)^{2-\alpha}-(q+\varepsilon)^{2-\alpha}}{2-\alpha}
-\frac{(\varepsilon+q)\left[(p+\varepsilon)^{1-\alpha}-(q+\varepsilon)^{1-\alpha}\right]}{1-\alpha}.
\label{eq:Beps_closed}
\end{equation}
For $\alpha=1$:
\begin{equation}
B_\varepsilon(q\mid p)=(p-q)-(\varepsilon+q)\log\!\left(\frac{p+\varepsilon}{q+\varepsilon}\right).
\label{eq:Beps_a1}
\end{equation}
For $\alpha=2$:
\begin{equation}
B_\varepsilon(q\mid p)=
\log\!\left(\frac{p+\varepsilon}{q+\varepsilon}\right)
+(\varepsilon+q)\left(\frac{1}{p+\varepsilon}-\frac{1}{q+\varepsilon}\right).
\label{eq:Beps_a2}
\end{equation}

\subsection{Connections to standard losses}
\label{app:standard_loss_connections}

For the unshifted family \(\omega_0(t)=t^{-\alpha}\), the divergence is
\[
B_0(q\mid p)=\int_q^p (t-q)t^{-\alpha}\,dt.
\]

For \(\alpha=0\),
\[
B_0(q\mid p)=\int_q^p (t-q)\,dt=\frac{(p-q)^2}{2},
\]
so the family recovers squared error up to a constant factor
\(1/2\).

For \(\alpha=2\),
\[
B_0(q\mid p)
=
\int_q^p \left(\frac{1}{t}-\frac{q}{t^2}\right)\,dt
=
\log\frac{p}{q}+\frac{q}{p}-1
=
\frac{q}{p}-\log\frac{q}{p}-1,
\]
which is exactly the Itakura--Saito divergence \(D_{\mathrm{IS}}(q,p)\).

\subsection{Empirical verification of SPB scaling behavior}
\label{sec:metric_verification}

Figure~\ref{fig:scale invariance across metrics} contrasts the scale behavior of the metrics for a fixed relative error across the ground-truth range.

\begin{figure}
    \centering
    \includegraphics[width=0.5\linewidth]{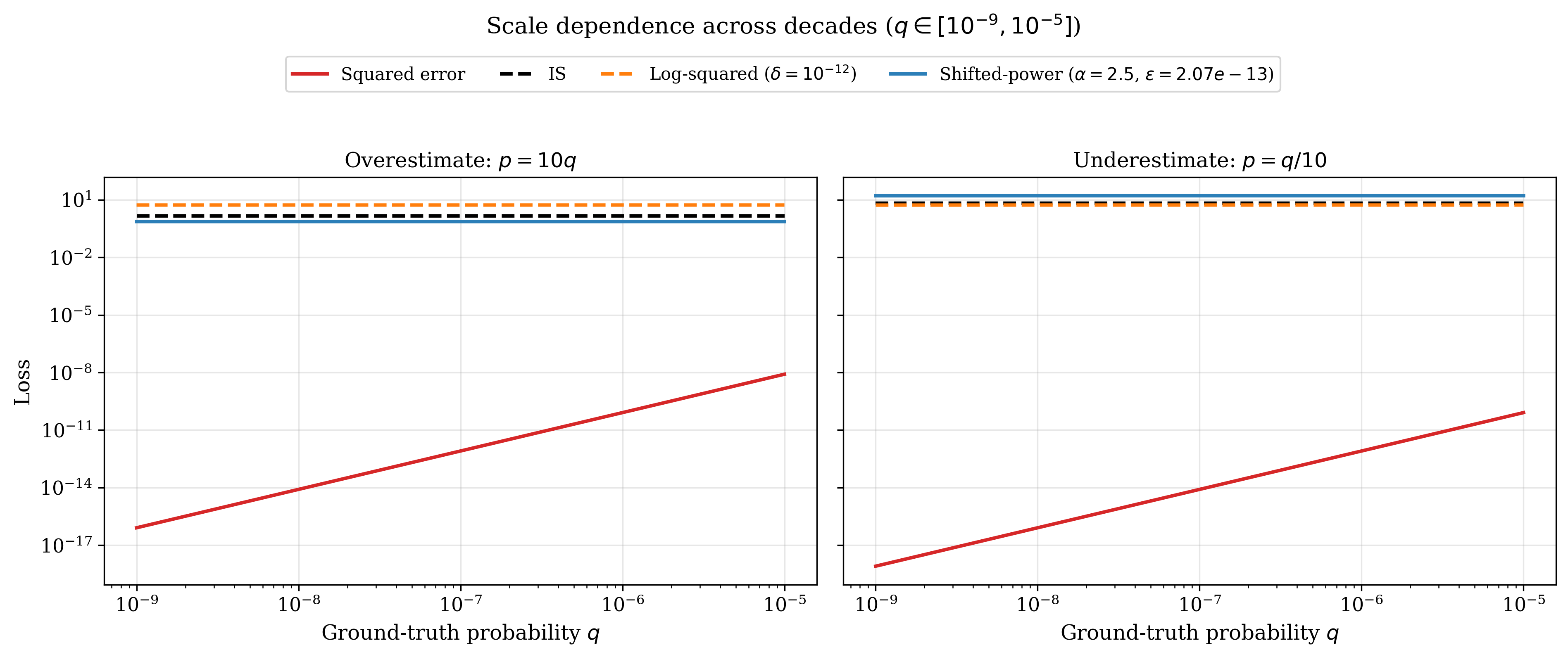}
    \caption{Scale dependence of each metric for a fixed relative error (overestimate $p=10q$, left; underestimate $p=q/10$, right) across ground-truth probabilities $q\in[10^{-9},10^{-5}]$. Squared error varies by orders of magnitude with $q$, while IS loss, log-space squared error, and the weighted shifted-power loss are flat (scale invariant).}
    \label{fig:scale invariance across metrics}
\end{figure}

\section{Pseudocode}
\label{sec:pseudocode}

Algorithm~\ref{alg:ga-amls} gives the full GA-AMLS procedure described in Section~\ref{sec:method}.

\begin{algorithm}
    \caption{GA-AMLS with MALA moves and a Student-$t$ prior}
    \label{alg:ga-amls}
    \begin{algorithmic}[1]
        \Require Input distribution $\mathcal{D}$, model $M$, target token $t$, population size $N$, quantile level $\rho$, target score $\tau$, MALA step size $h$, MCMC steps $T$
        \Ensure Probability estimate $\widehat{P}$
        \State Draw calibration inputs from \(D\) and compute raw activations \(\{a_j\}_{j=1}^{n_{\mathrm{cal}}}\) at the chosen site.
        \State Estimate \(\mu,\Sigma\), choose \(A\) such that \(AA^\top=\Sigma\), and transform activations by
        \[
        u_j \leftarrow (a_j-\mu)A^{-\top}.
        \]
        \State Fit the prior in whitened coordinates as an independent Student-\(t\) distribution
        \[
        \pi(u)=\prod_{d=1}^{D} t_{\nu=5}(u_d;0,1).
        \]
        \State Define the score \(s(u)\) by inserting the unwhitened activation \(a(u)=uA^\top+\mu\) into the model.        \State Initialize particles $\{u_i\}_{i=1}^{N}$ by sampling from the whitened calibration activations.
        \State $k\gets 0$, $L_0\gets -\infty$, $\widehat{P}\gets 1$
        \While{$L_k < \tau$}
            \State Compute scores $\{s(u_i)\}_{i=1}^{N}$.
            \State Set the next level
            \[
                L_{k+1}\gets \min\left\{\tau,\operatorname{Quantile}_{\rho}\left(\{s(u_i)\}_{i=1}^{N}\right)\right\}.
            \]
            \State Estimate the conditional survival probability
            \[
                \widehat{p}_k \gets \frac{1}{N}\sum_{i=1}^{N}\mathbf{1}\{s(u_i)\ge L_{k+1}\},
                \qquad
                \widehat{P}\gets \widehat{P}\widehat{p}_k.
            \]
            \State Keep the survivors $\mathcal{S}\gets\{u_i:s(u_i)\ge L_{k+1}\}$ and resample with replacement from $\mathcal{S}$ until the population again has size $N$.
            \For{$r=1,\dots,T$}
                \For{each particle $u_i$}
                    \State Compute the prior score gradient
                    \[
                        g_i \gets \nabla_u \log \pi(u_i).
                    \]
                    \State Propose a MALA move
                    \[
                        u_i' \gets u_i + h g_i + \sqrt{2h}\,\xi_i,
                        \qquad \xi_i\sim\mathcal{N}(0,I).
                    \]
                    \If{$s(u_i') < L_{k+1}$}
                        \State Reject the proposal.
                    \Else
                        \State Compute $g_i'\gets \nabla_u\log\pi(u_i')$ and proposal densities
                        \[
                            q(u_i'\mid u_i)=\mathcal{N}(u_i';\,u_i+h g_i,\,2hI),
                            \qquad
                            q(u_i\mid u_i')=\mathcal{N}(u_i;\,u_i'+h g_i',\,2hI).
                        \]
                        \State Accept $u_i'$ with probability
                        \[
                            a_i=\min\left\{1,\,
                            \frac{\pi(u_i')q(u_i\mid u_i')}{\pi(u_i)q(u_i'\mid u_i)}
                            \right\}.
                        \]
                    \EndIf
                \EndFor
                \State Adapt $h$ during burn-in toward the MALA target acceptance rate.
            \EndFor
            \State $k\gets k+1$
        \EndWhile
        \State \Return $\widehat{P}$
    \end{algorithmic}
    \end{algorithm}

\section{Experimental and Implementation Details}

\subsection{Hyperparameters}
\label{sec:hyperparameters}

\paragraph{MAPLA tuning candidate.}
In the tuning sweep we also considered MAPLA, the Metropolis-adjusted
Preconditioned Langevin Algorithm \citep{srinivasan_high-accuracy_2025}.
MAPLA generalizes MALA by replacing the isotropic proposal covariance with a
position-dependent metric. For a target density $\pi$, its proposal has the form
\[
    u'
    \sim
    \mathcal N\!\left(
        u + h G(u)^{-1}\nabla_u \log \pi(u),
        2h G(u)^{-1}
    \right),
\]
followed by the usual Metropolis--Hastings correction. In our AMLS setting this
kernel is applied to the constrained target
\[
    \pi_k(u) \propto \pi(u)\mathbf 1\{s(u)\ge L_k\}.
\]
We used MAPLA only as a hyperparameter-tuning candidate: although it can reduce
rejections near level-set boundaries by adapting proposal geometry, its extra
metric computations did not yield a sufficient improvement over MALA under a
matched compute budget. Therefore all main GA-AMLS results use MALA.

\textbf{Tuning protocol.} Like \citet{wu2025estimatingprobabilitiesrareoutputs}, to prevent overfitting our method was only run on the first four distributions during development, and architectural choices were finalized before testing on the last four distributions. We chose the MH kernel and the prior as those which minimized squared error loss and IS loss on 100 randomly chosen tokens with ground-truth probabilities in the range $[10^{-5},10^{-3}]$; probabilities in this range are directly estimable by naive Monte Carlo with a modest budget, so tuning does not require privileged knowledge of the extreme tail. The selected prior and kernel were consistent across metrics, distributions $\mathcal{D}$, and model sizes. Both MALA and MAPLA outperformed the random-walk kernel (RWMH), and the Student-$t$ prior outperformed a Gaussian prior; since MAPLA's higher computational cost did not yield a commensurate performance gain over MALA, we chose the MALA kernel. To ensure a fair comparison across MH kernels, we equalized the total computational budget per rejuvenation step: since MALA and MAPLA require gradient computations, we allocate 1500 MH steps to RWMH and 750 steps to the gradient-based kernels. 

\textbf{GA-AMLS configuration}
All GA-AMLS estimates are from a single run per (token, distribution, model). We use a population of \(N=2000\) particles, threshold
quantile \(\rho=0.6\), \(T=900\) MALA steps per level with a burn-in of
\(T_{\mathrm{burn}}=126\) steps and initial step size \(h_0=10^{-3}\), and a
diagonal Student-\(t\) prior with \(\nu=5\) degrees of freedom fit on
\(n_{\mathrm{cal}}=2^{16}=65{,}536\) calibration activations drawn from
\(\mathcal{D}\). The calibration activations are generated once per
(model,D) pair and reused across target tokens.

\textbf{Baselines.} For the importance-sampling baselines ITGIS and MHIS we used the temperatures tuned by \citet{wu2025estimatingprobabilitiesrareoutputs} .We identified and fixed a numerical-tolerance bug in the public QLD implementation: in the randomized projection routine for finding the shortest accepting vector, the tolerance was meant to ignore only slightly negative constraint violations \(P x+b\ge -\texttt{tol}\), but the original code used \(P x+b<\texttt{tol}\), incorrectly marking feasible near-boundary constraints as violated.; all QLD results in this paper use the corrected implementation, which improves QLD's performance relative to the numbers reported by \citet{wu2025estimatingprobabilitiesrareoutputs}.

\textbf{Ablation studies.} Figures~\ref{fig:hparam_monitoring}--\ref{fig:hparam_audit} report the per-distribution SPB loss for every kernel--prior--hyperparameter configuration explored during tuning, under each of the four evaluation profiles of Table~\ref{tab:alpha_gamma_use_cases}. Two patterns hold consistently across distributions and model sizes: (i) the MALA kernel outperforms RWMH despite using half as many MH proposal steps, and (ii) the Student-$t$ prior outperforms a Gaussian prior.

\begin{figure}
    \centering
    \includegraphics[width=1\linewidth]{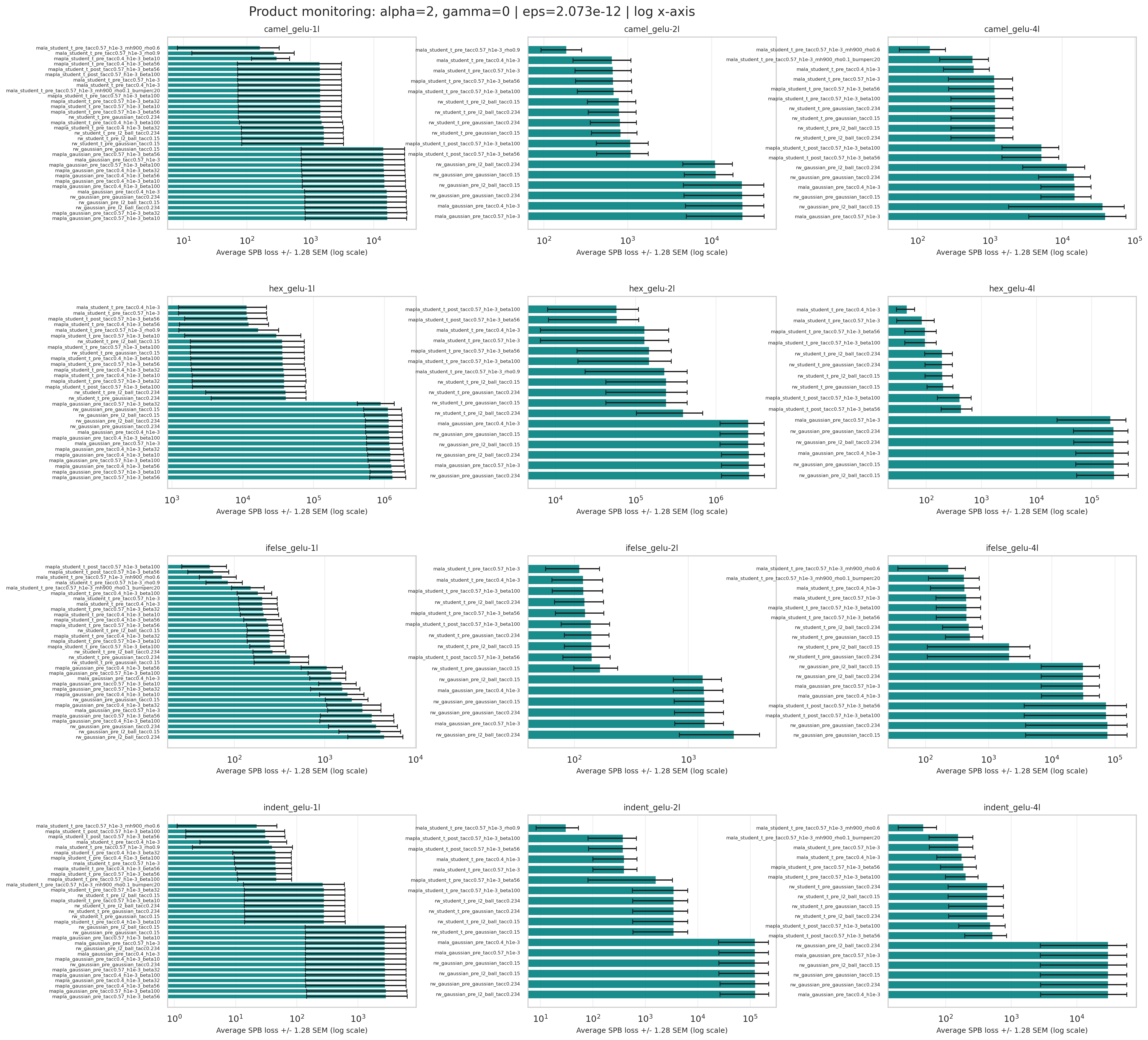}
    \caption{Hyperparameter sweep under the product-monitoring evaluation profile ($\alpha=2$, $\gamma=0$, $\varepsilon=2.073\times10^{-12}$). Bars show average SPB loss ($\pm 1.28$ SEM, log scale) for each kernel--prior--hyperparameter configuration, per input distribution (rows) and model size (columns). Configuration names encode kernel, prior, activation site, and tuning parameters.}
    \label{fig:hparam_monitoring}
\end{figure}

\begin{figure}
    \centering
    \includegraphics[width=1\linewidth]{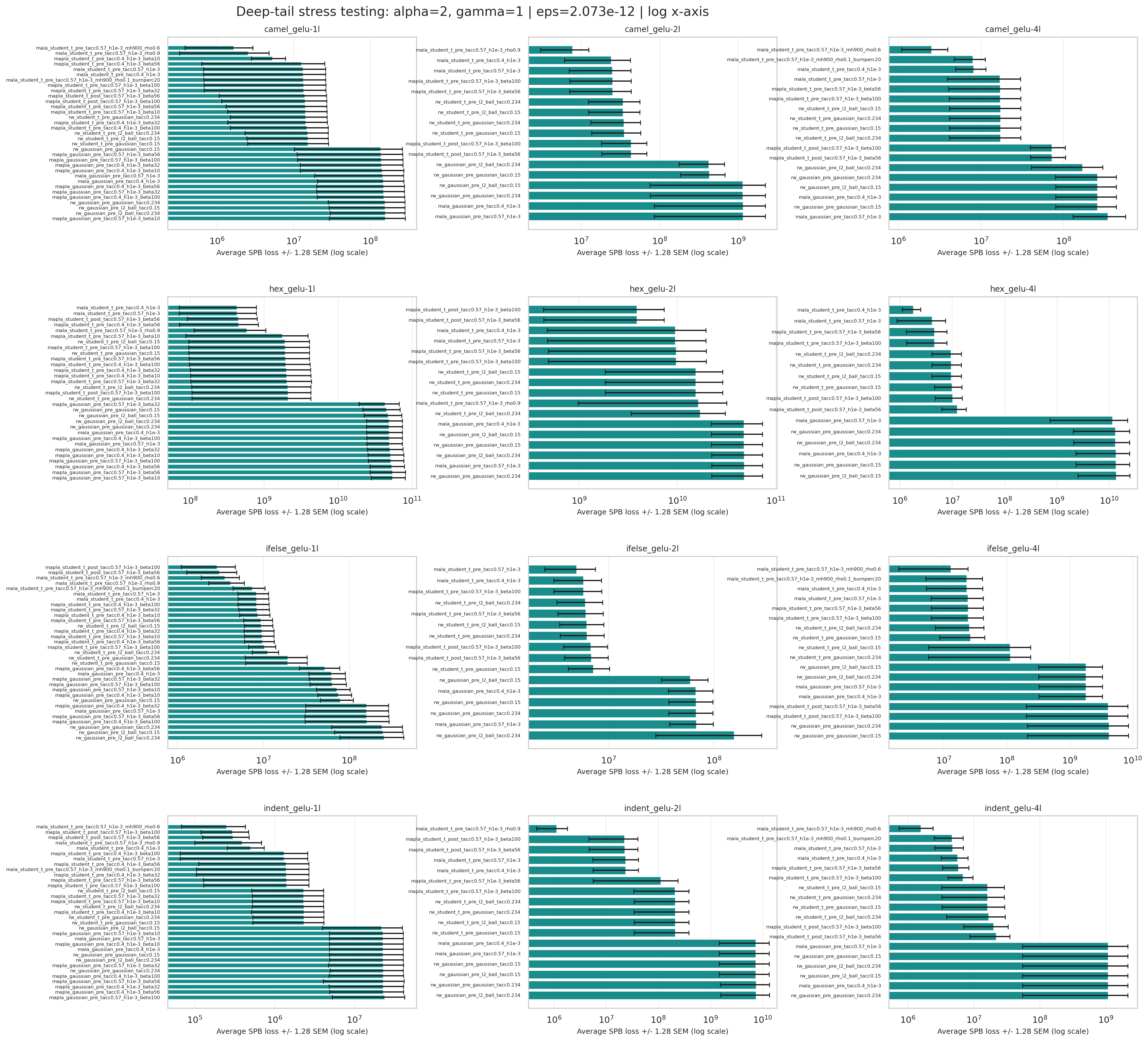}
    \caption{Hyperparameter sweep under the deep-tail stress-testing profile ($\alpha=2$, $\gamma=1$); format as in Figure~\ref{fig:hparam_monitoring}.}
    \label{fig:hparam_stress}
\end{figure}

\begin{figure}
    \centering
    \includegraphics[width=1\linewidth]{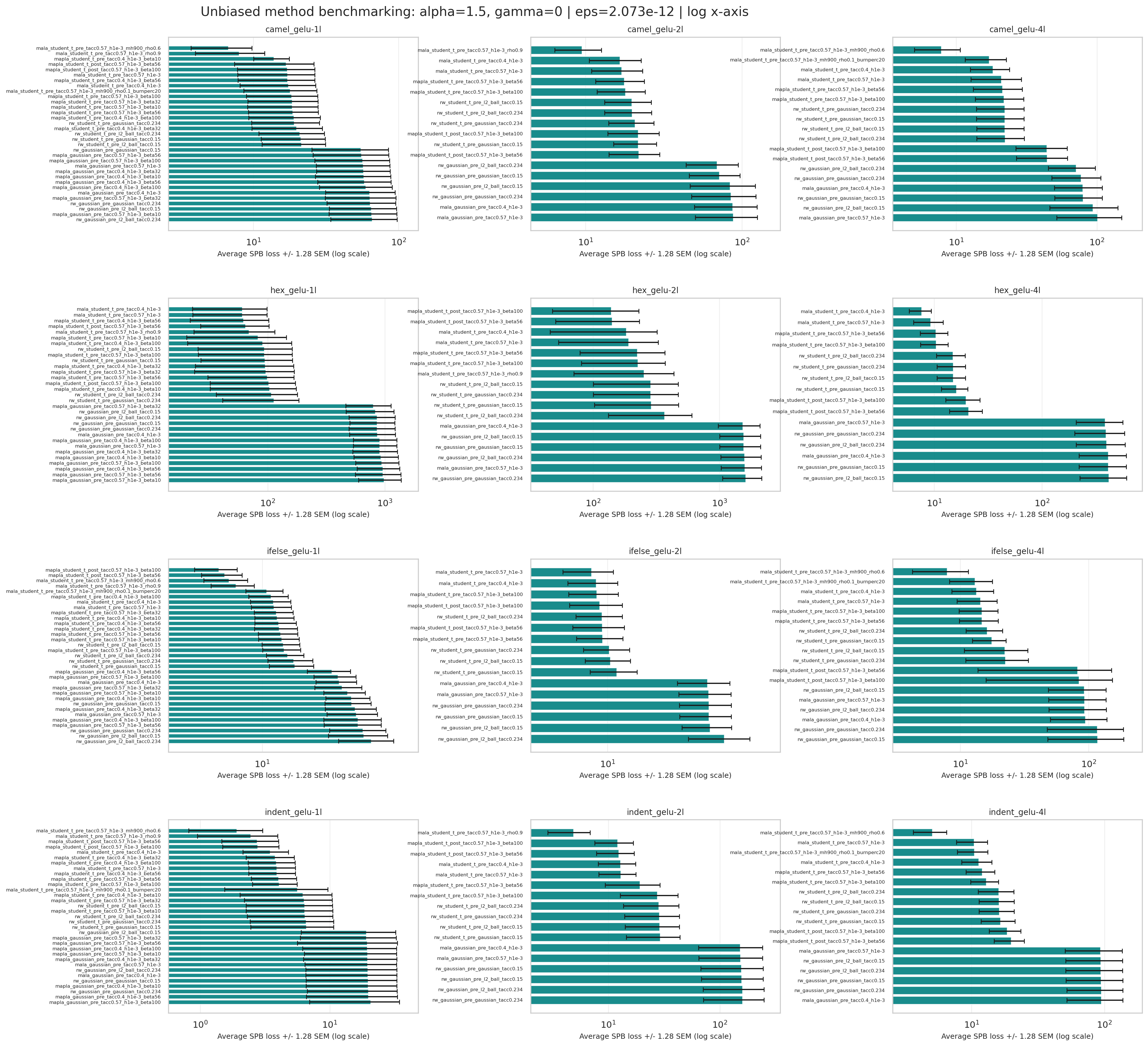}
    \caption{Hyperparameter sweep under the unbiased method benchmarking profile ($\alpha=1.5$, $\gamma=0$); format as in Figure~\ref{fig:hparam_monitoring}.}
    \label{fig:hparam_benchmarking}
\end{figure}
\begin{figure}
    \centering
    \includegraphics[width=1\linewidth]{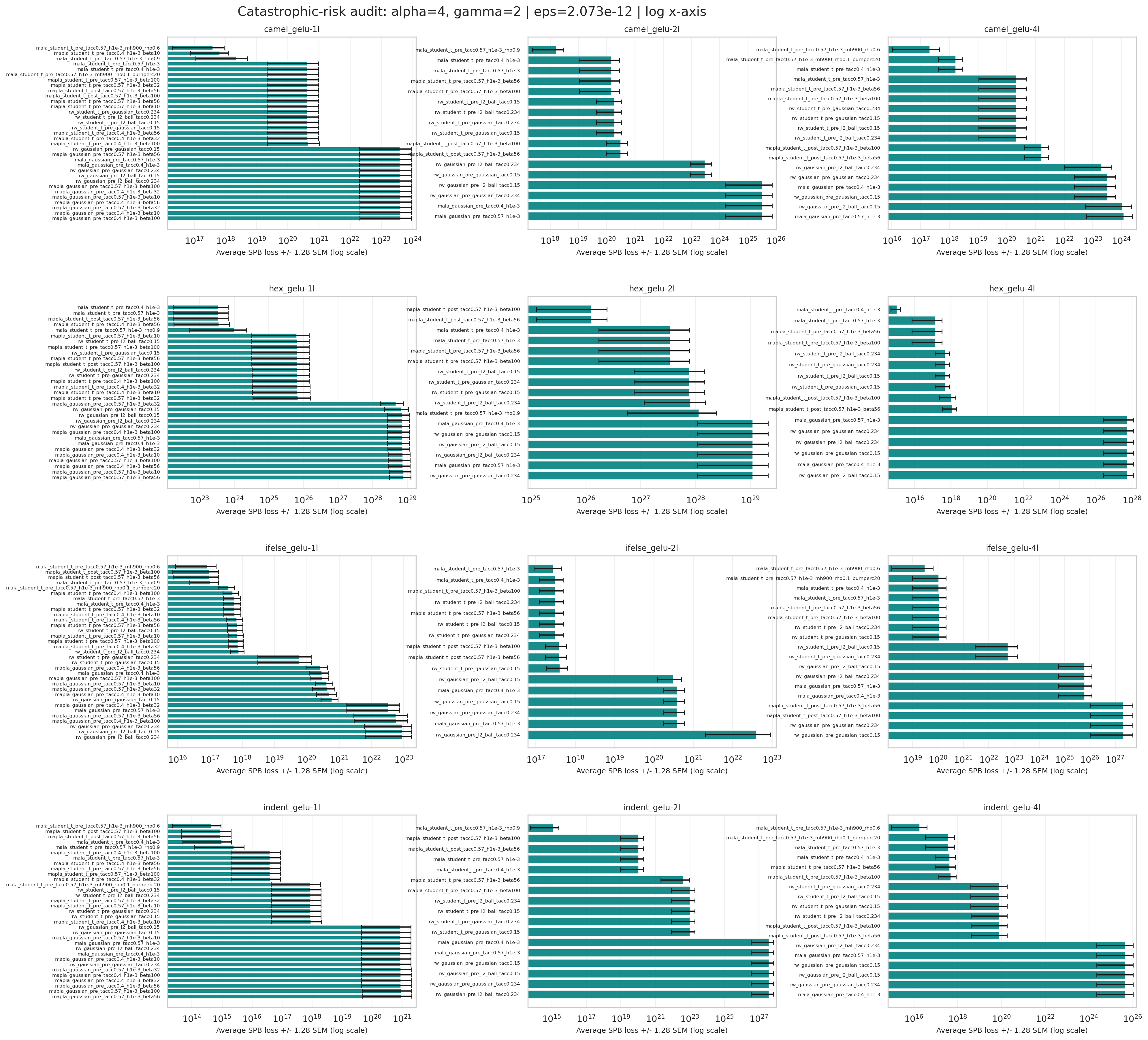}
    \caption{Hyperparameter sweep under the catastrophic-risk audit profile ($\alpha=4$, $\gamma=2$); format as in Figure~\ref{fig:hparam_monitoring}.}
    \label{fig:hparam_audit}
\end{figure}

\subsection{Activation Whitening Diagnostic}
\label{app:activation_whitening}

Figure~\ref{fig:anisot} plots the held-out marginal variance of each pre-LayerNorm, pre-unembedding activation dimension, sorted by variance, before and after whitening; the whitening transform used by GA-AMLS substantially flattens this marginal variance profile.

\begin{figure}
    \centering
    \includegraphics[width=1\linewidth]{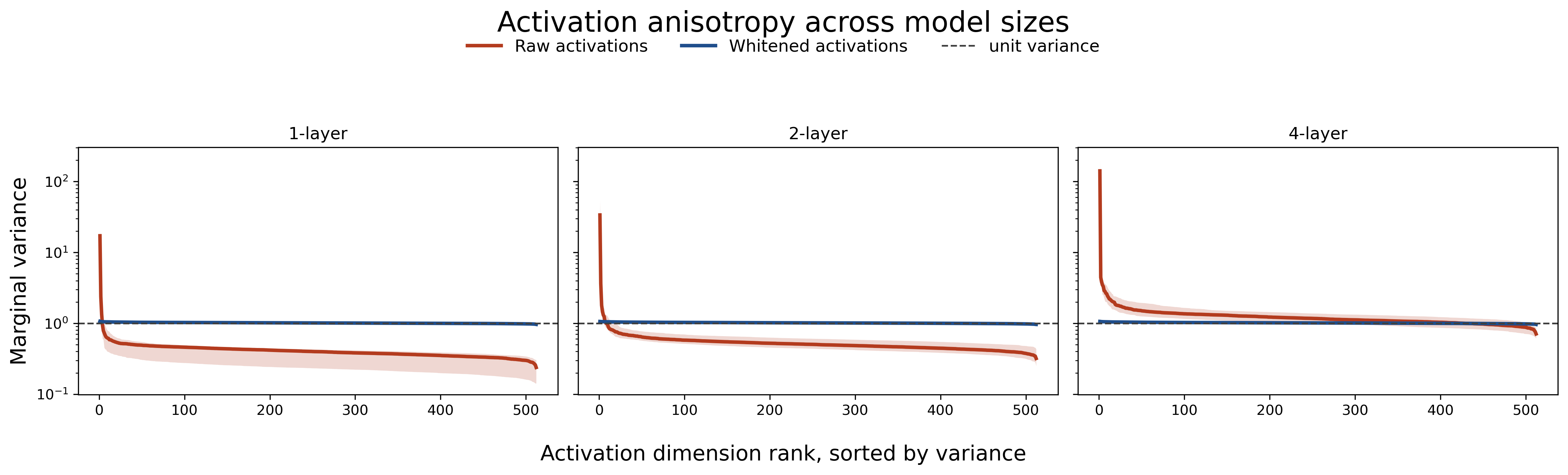}
\caption{
\textbf{Marginal activation variances before and after whitening.}
Let \(a_i \in \mathbb{R}^d\) denote held-out pre-LayerNorm, pre-unembedding
activations. For each model size and input distribution, we compute the empirical
marginal variance \( \mathrm{Var}_i(a_{ij}) \) of every activation dimension \(j\),
and repeat the calculation after applying the whitening transform used by GA-AMLS.
For visualization, dimensions are sorted from largest to smallest marginal variance
within each distribution; the x-axis is therefore the sorted dimension rank. Each panel reports the median sorted curve across the
eight input distributions, with shaded bands showing the interquartile range. Raw
activations exhibit strong marginal anisotropy, while whitening produces a much flatter
variance profile, making isotropic MALA proposals in whitened coordinates more
reasonable.
}
    \label{fig:anisot}
\end{figure}

\section{Scatterplots}
\label{sec:all_scatterplots}

Figures~\ref{fig:scatter_1l}--\ref{fig:scatter_4l} show estimates against ground truth for all four methods on all 8 input distributions, for each model size, complementing the two examples in Figure~\ref{fig:2dist_eg}.

\begin{figure}
    \centering
    \includegraphics[width=1\linewidth]{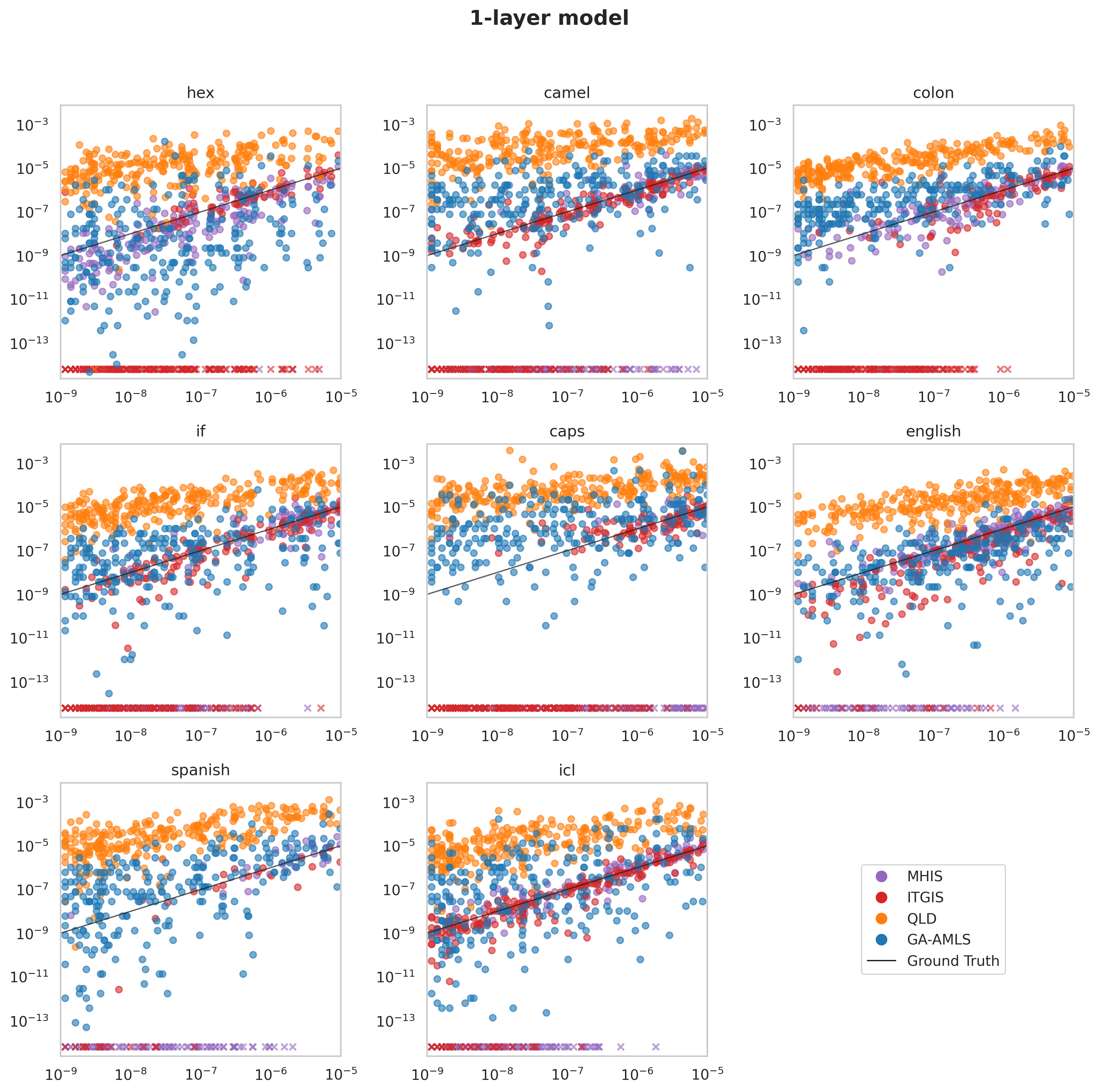}
    \caption{Estimates vs.\ ground truth for all methods across all 8 input distributions (1-layer model). MHIS/ITGIS zero estimates are plotted as crosses at the bottom of each panel for visibility.}
    \label{fig:scatter_1l}
\end{figure}

\begin{figure}
    \centering
    \includegraphics[width=1\linewidth]{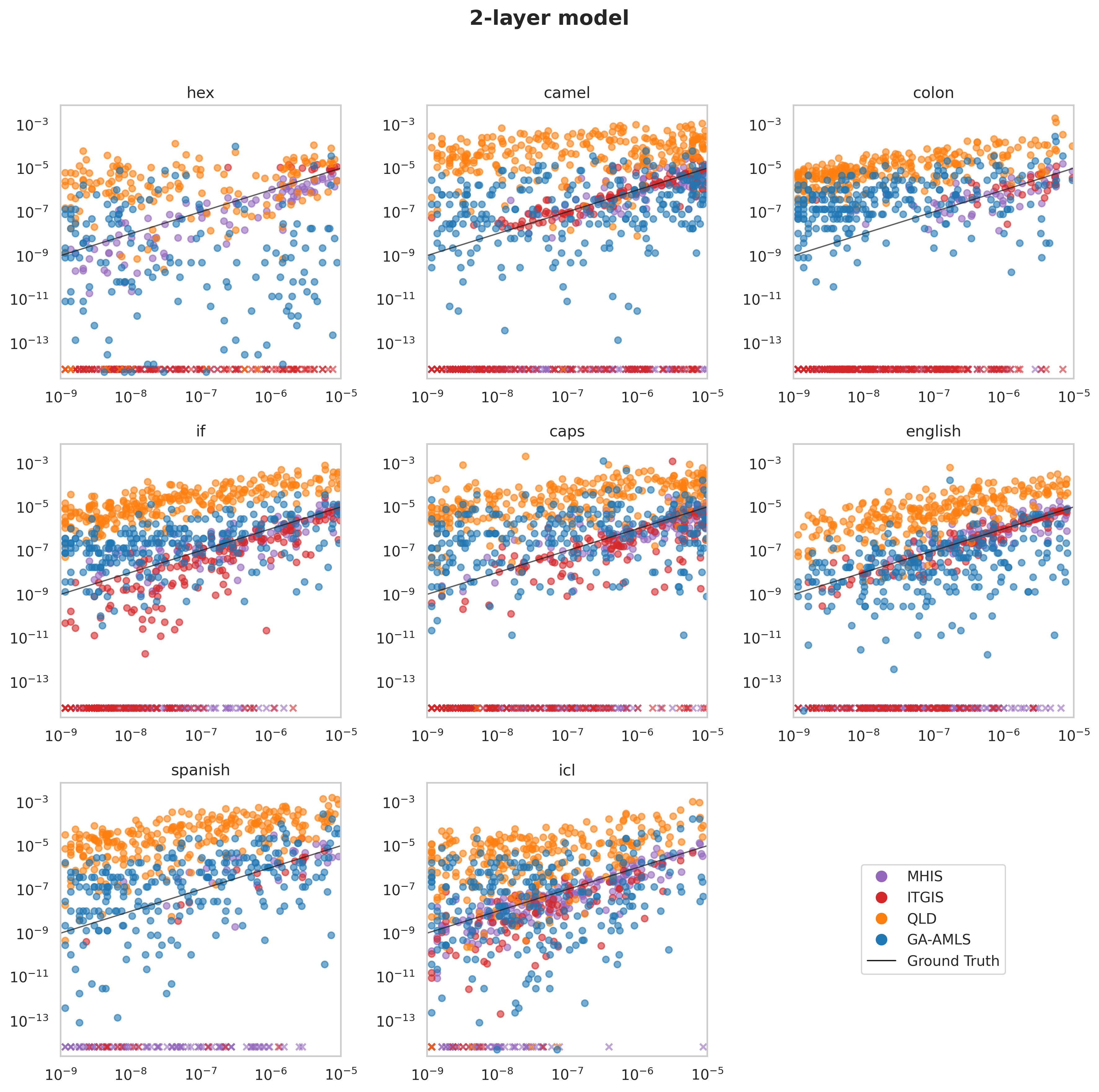}
    \caption{As Figure~\ref{fig:scatter_1l}, for the 2-layer model.}
    \label{fig:scatter_2l}
\end{figure}

\begin{figure}
    \centering
    \includegraphics[width=1\linewidth]{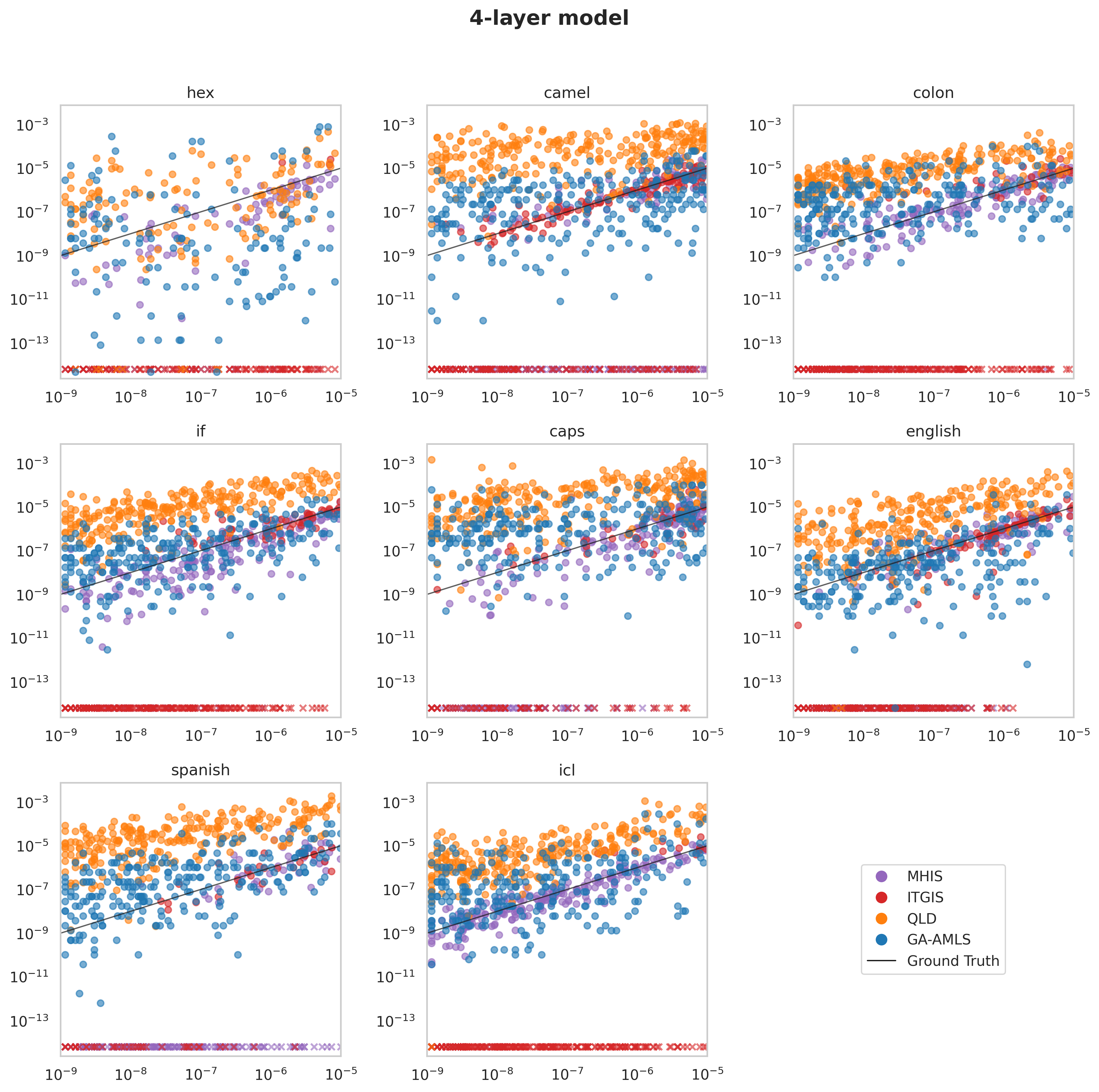}
    \caption{As Figure~\ref{fig:scatter_1l}, for the 4-layer model.}
    \label{fig:scatter_4l}
\end{figure}

\section{All Methods Loss by Distributions}
\label{sec:loss_by_dist}

Figure~\ref{fig:SPB-dist} breaks the aggregate SPB results down by input distribution, Figure~\ref{fig:Layers-SPB} reports the aggregate SPB loss across model sizes for all four $(\alpha,\gamma)$ evaluation profiles, and Tables~\ref{tab:spb_alpha_1p5_gamma_0p0} and~\ref{tab:spb_alpha_2p0_gamma_0p0} report per-distribution SPB loss.

\begin{figure}[!htbp]
    \centering
    \includegraphics[width=1\linewidth]{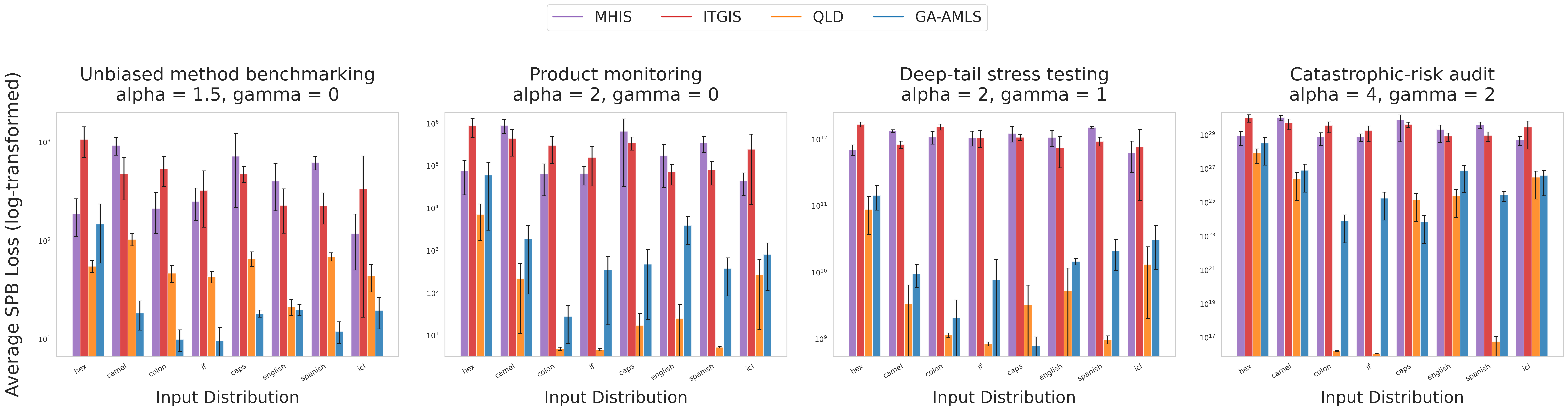}
    \caption{\textbf{SPB loss across individual distributions and parameter configurations}. GA-AMLS (blue) consistently outperforms baseline methods across 7 of the 8 text distributions under a symmetric penalty ($\alpha = 1.5$), demonstrating that aggregate improvements are not skewed by outliers. As the penalty for underestimation increases ($\alpha > 1.5$
), QLD (orange) gains an advantage due to its systematic overestimation bias. Lower is better; error bars denote standard error across the three model sizes.}
    \label{fig:SPB-dist}
\end{figure}

\begin{figure}[h]
    \centering
    \includegraphics[width=1\linewidth]{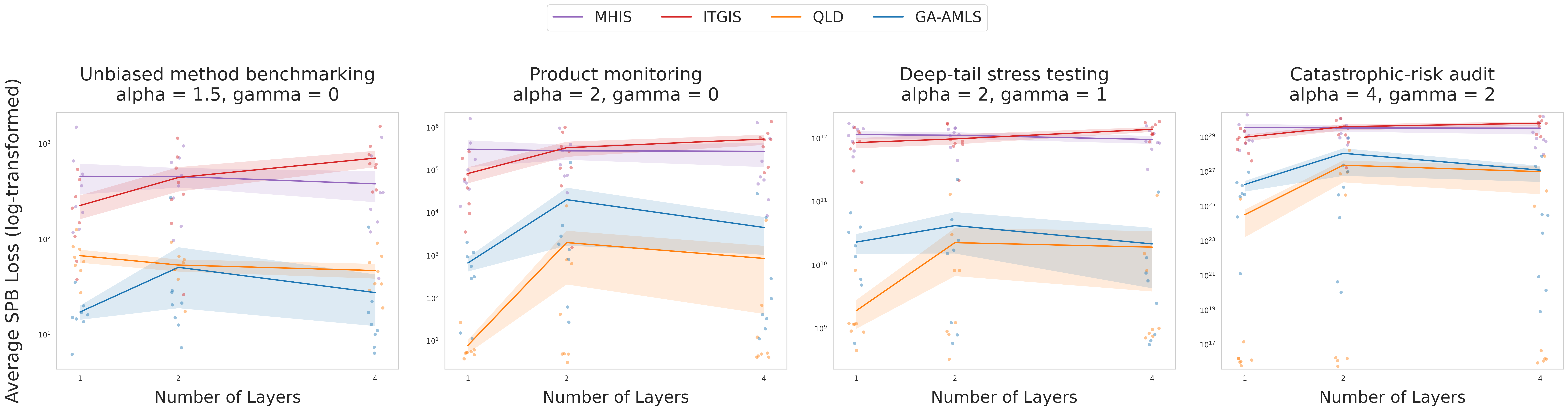}
    \caption{\textbf{The SPB loss of all methods across different model sizes and parameter combinations.} The solid lines indicate the loss of each method averaged over all 8 distributions, with bands showing standard error. The colored points indicate the loss on individual distributions, with horizontal jitter added for visibility. Lower is better. We see that GA-AMLS (blue) outperforms when the penalty for FN is approximately the same as FP, and when the weight on the rarest events increases.}
    \label{fig:Layers-SPB}
\end{figure}

\begin{table}[ht!]
\centering
\scriptsize
\setlength{\tabcolsep}{2pt}
\caption{SPB loss comparison across model sizes ($\alpha=1.5$, $\gamma=0$, $\varepsilon=\varepsilon_{\mathrm{corr}}$). Lower is better.}
\label{tab:spb_alpha_1p5_gamma_0p0}

\begin{subtable}{0.315\textwidth}
\centering
\caption{1-layer model}
\begin{tabular}{lcccc}
\toprule
Dist. & MHIS & ITGIS & QLD & GA-AMLS \\
\midrule
\texttt{hex} & 117.64 & 538.82 & 53.26 & \textbf{35.44} \\
\texttt{camel} & 660.47 & 148.67 & 125.83 & \textbf{14.62} \\
\texttt{colon} & 127.09 & 278.54 & 58.18 & \textbf{6.24} \\
\texttt{if} & 362.68 & 211.57 & 46.96 & \textbf{15.16} \\
\texttt{caps} & 1489.97 & 421.47 & 83.47 & \textbf{17.05} \\
\texttt{english} & 190.48 & 58.75 & 27.49 & \textbf{16.16} \\
\texttt{spanish} & 480.38 & 106.99 & 78.54 & \textbf{13.67} \\
\texttt{icl} & 218.55 & 37.67 & 64.68 & \textbf{20.05} \\
\midrule
Avg. & 455.91 & 225.31 & 67.30 & \textbf{17.30} \\
\bottomrule
\end{tabular}
\end{subtable}
\hfill
\begin{subtable}{0.315\textwidth}
\centering
\caption{2-layer model}
\begin{tabular}{lcccc}
\toprule
Dist. & MHIS & ITGIS & QLD & GA-AMLS \\
\midrule
\texttt{hex} & 136.82 & 1141.36 & \textbf{66.22} & 272.59 \\
\texttt{camel} & 947.94 & 725.50 & 93.04 & \textbf{27.75} \\
\texttt{colon} & 361.36 & 555.99 & 48.02 & \textbf{12.63} \\
\texttt{if} & 270.16 & 146.68 & 48.44 & \textbf{7.31} \\
\texttt{caps} & 463.91 & 392.87 & 56.80 & \textbf{20.55} \\
\texttt{english} & 708.88 & 295.81 & \textbf{17.51} & 21.46 \\
\texttt{spanish} & 637.53 & 259.79 & 61.24 & \textbf{15.04} \\
\texttt{icl} & 97.04 & \textbf{26.26} & 38.06 & 28.94 \\
\midrule
Avg. & 452.96 & 443.03 & 53.67 & \textbf{50.78} \\
\bottomrule
\end{tabular}
\end{subtable}
\hfill
\begin{subtable}{0.315\textwidth}
\centering
\caption{4-layer model}
\begin{tabular}{lcccc}
\toprule
Dist. & MHIS & ITGIS & QLD & GA-AMLS \\
\midrule
\texttt{hex} & 308.79 & 1520.17 & \textbf{45.92} & 133.83 \\
\texttt{camel} & 1167.25 & 561.11 & 90.86 & \textbf{12.81} \\
\texttt{colon} & 151.61 & 767.77 & 34.08 & \textbf{11.06} \\
\texttt{if} & 119.23 & 615.02 & 34.02 & \textbf{6.41} \\
\texttt{caps} & 205.58 & 609.63 & 56.90 & \textbf{17.07} \\
\texttt{english} & 306.29 & 327.72 & \textbf{19.07} & 22.30 \\
\texttt{spanish} & 743.64 & 312.04 & 66.29 & \textbf{7.39} \\
\texttt{icl} & 38.85 & 940.25 & 28.98 & \textbf{10.07} \\
\midrule
Avg. & 380.15 & 706.71 & 47.01 & \textbf{27.62} \\
\bottomrule
\end{tabular}
\end{subtable}
\end{table}

\begin{table}[ht!]
\centering
\fontsize{6.2}{7.0}\selectfont
\setlength{\tabcolsep}{0.8pt}
\caption{SPB loss comparison across model sizes ($\alpha=2$, $\gamma=0$, $\varepsilon=\varepsilon_{\mathrm{corr}}$). Lower is better.}
\label{tab:spb_alpha_2p0_gamma_0p0}

\begin{subtable}{0.32\textwidth}
\centering
\caption{1-layer model}
\begin{tabular}{@{}lcccc@{}}
\toprule
Dist. & MHIS & ITGIS & QLD & GA-AMLS \\
\midrule
\texttt{hex} & 3.62e+04 & 2.70e+05 & \textbf{26.99} & 2053.05 \\
\texttt{camel} & 4.34e+05 & 3.84e+04 & \textbf{6.15} & 318.50 \\
\texttt{colon} & 1.43e+04 & 6.20e+04 & \textbf{5.32} & 11.35 \\
\texttt{if} & 1.02e+05 & 7.99e+04 & \textbf{4.72} & 935.19 \\
\texttt{caps} & 1.62e+06 & 1.90e+05 & \textbf{5.34} & 15.28 \\
\texttt{english} & 5.57e+04 & 1.63e+04 & \textbf{3.79} & 1183.62 \\
\texttt{spanish} & 1.79e+05 & 9693.76 & \textbf{5.55} & 289.98 \\
\texttt{icl} & 5.00e+04 & 3564.92 & \textbf{5.13} & 556.86 \\
\midrule
Avg. & 3.12e+05 & 8.38e+04 & \textbf{7.87} & 670.48 \\
\bottomrule
\end{tabular}
\end{subtable}
\hspace{0.002\textwidth}%
\begin{subtable}{0.32\textwidth}
\centering
\caption{2-layer model}
\begin{tabular}{@{}lcccc@{}}
\toprule
Dist. & MHIS & ITGIS & QLD & GA-AMLS \\
\midrule
\texttt{hex} & 2.96e+04 & 1.02e+06 & \textbf{1.46e+04} & 1.52e+05 \\
\texttt{camel} & 9.71e+05 & 7.79e+05 & \textbf{644.61} & 5066.22 \\
\texttt{colon} & 1.35e+05 & 2.76e+05 & \textbf{4.88} & 62.13 \\
\texttt{if} & 7.62e+04 & 4.30e+04 & \textbf{4.92} & 27.54 \\
\texttt{caps} & 2.89e+05 & 3.60e+05 & \textbf{42.00} & 1379.45 \\
\texttt{english} & 4.02e+05 & 1.12e+05 & \textbf{3.12} & 2853.15 \\
\texttt{spanish} & 3.06e+05 & 1.14e+05 & \textbf{4.99} & 816.30 \\
\texttt{icl} & 7.31e+04 & 1545.28 & \textbf{798.60} & 1858.51 \\
\midrule
Avg. & 2.85e+05 & 3.38e+05 & \textbf{2017.40} & 2.05e+04 \\
\bottomrule
\end{tabular}
\end{subtable}
\hspace{0.024\textwidth}%
\begin{subtable}{0.32\textwidth}
\centering
\caption{4-layer model}
\begin{tabular}{@{}lcccc@{}}
\toprule
Dist. & MHIS & ITGIS & QLD & GA-AMLS \\
\midrule
\texttt{hex} & 1.64e+05 & 1.38e+06 & \textbf{6749.56} & 2.81e+04 \\
\texttt{camel} & 1.30e+06 & 5.27e+05 & \textbf{4.90} & 287.16 \\
\texttt{colon} & 4.77e+04 & 5.83e+05 & \textbf{4.14} & 11.17 \\
\texttt{if} & 2.03e+04 & 3.53e+05 & \textbf{4.16} & 97.97 \\
\texttt{caps} & 5.92e+04 & 5.17e+05 & \textbf{4.42} & 41.13 \\
\texttt{english} & 7.00e+04 & 8.72e+04 & \textbf{68.21} & 7759.93 \\
\texttt{spanish} & 5.58e+05 & 1.18e+05 & \textbf{5.17} & 33.23 \\
\texttt{icl} & 8595.27 & 7.34e+05 & \textbf{12.28} & 19.18 \\
\midrule
Avg. & 2.79e+05 & 5.38e+05 & \textbf{856.60} & 4547.43 \\
\bottomrule
\end{tabular}
\end{subtable}
\end{table}

\end{document}